\journal{Neurocomputing}
\newtheorem{prop}{Proposition}
\newtheorem{lemma}{Lemma}
\newtheorem{definition}{Definition}
\newtheorem{theorem}{Theorem}
\begin{document}

\begin{frontmatter}

\title{Mean Absolute Percentage Error for Regression Models}

\author[viadeo,samm]{Arnaud de Myttenaere}
\ead{ademyttenaere@viadeoteam.com}

\author[viadeo]{Boris Golden}
\ead{bgolden@viadeoteam.com}

\author[cri]{Bénédicte Le Grand}
\ead{Benedicte.Le-Grand@univ-paris1.fr}

\author[samm]{Fabrice Rossi\corref{mycorrespondingauthor}}
\cortext[mycorrespondingauthor]{Corresponding author}
\ead{Fabrice.Rossi@univ-paris1.fr}
\address[viadeo]{Viadeo, 30 rue de la Victoire, 75009 Paris - France}
\address[cri]{Centre de Recherche en Informatique -- Universit\'e Paris 1 Panth\'eon - Sorbonne, 
90 rue de Tolbiac, 75013 Paris - France}
\address[samm]{SAMM EA 4534 -- Universit\'e Paris 1 Panth\'eon - Sorbonne, 
90 rue de Tolbiac, 75013 Paris - France}

\begin{abstract}
  We study in this paper the consequences of using the Mean Absolute
  Percentage Error (MAPE) as a measure of quality for regression models. We
  prove the existence of an optimal MAPE model and we show the universal
  consistency of Empirical Risk Minimization based on the MAPE. We also show
  that finding the best model under the MAPE is equivalent to doing weighted
  Mean Absolute Error (MAE) regression, and we apply this weighting strategy
  to kernel regression. The behavior of the MAPE kernel regression is
  illustrated on simulated data.
\end{abstract}

\begin{keyword}
Mean Absolute Percentage Error; Empirical Risk Minimization; Consistency; Optimization; Kernel Regression.
\end{keyword}

\end{frontmatter}

\linenumbers

\section{Introduction}
Classical regression models are obtained by choosing a model that minimizes an
empirical estimation of the Mean Square Error (MSE). Other quality measures
are used, in general for robustness reasons. This is the case of the Huber
loss \cite{Huber1964} and of the Mean Absolute Error (MAE, also know as median
regression), for instance. Another example of regression quality measure is
given by the Mean Absolute Percentage Error (MAPE). If $x$ denotes the vector
of explanatory variables (the input to the regression model), $y$ denotes the
target variable and $g$ is a regression model, the MAPE of $g$ is obtained by
averaging the ratio $\frac{|g(x)-y|}{|y|}$ over the data.

The MAPE is often used in practice because of its very intuitive
interpretation in terms of relative error. The use of the MAPE is relevant in
finance, for instance, as gains and losses are often measured in relative
values. It is also useful to calibrate prices of products, since customers are
sometimes more sensitive to relative variations than to absolute variations.

In real world applications, the MAPE is frequently used when the quantity to
predict is known to remain way above zero. It was used for instance as the
quality measure in a electricity consumption forecasting contest organized by
GdF ecometering on datascience.net\footnote{\url{http//www.datascience.net},
  see \url{https://www.datascience.net/fr/challenge/16/details} for details on
  this contest.}. More generally, it has been argued that the MAPE is very
adapted for forecasting applications, especially in situations where enough
data are available, see e.g. \cite{ArmstrongCollopy1992}.

We study in this paper the consequences of using the MAPE as the quality
measure for regression models. Section \ref{sec:setting} introduces our
notations and the general context. It recalls the definition of the
MAPE. Section \ref{sec:exist-MAPE-regr} is dedicated to a first important
question raised by the use of the MAPE: it is well known that the optimal
regression model with respect to the MSE is given by the regression function
(i.e., the conditional expectation of the target variable knowing the
explanatory variables). Section \ref{sec:exist-MAPE-regr} shows that an
optimal model can also be defined for the MAPE. Section
\ref{sec:effects-MAPE-compl} studies the consequences of replacing MSE/MAE by
the MAPE on capacity measures such as covering numbers and Vapnik-Chervonenkis
dimension. We show in particular that MAE based measures can be used to upper
bound MAPE ones. Section \ref{sec:consistency-MAPE} proves a universal
consistency result for Empirical Risk Minimization applied to the MAPE, using
results from Section \ref{sec:effects-MAPE-compl}. Finally, Section
\ref{sec:MAPE-kern-regr} shows how to perform MAPE regression in practice. It
adapts quantile kernel regression to the MAPE case and studies the behavior of
the obtained model on simulated data. 

\section{General setting and notations}\label{sec:setting}
We use in this paper a standard regression setting in which the data are fully
described by a random pair $Z=(X,Y)$ with values in
$\mathbb{R}^d\times \mathbb{R}$. We are interested in finding a good model for
the pair, that is a (measurable) function $g$ from $\mathbb{R}^d$ to
$\mathbb{R}$ such that $g(X)$ is ``close to'' $Y$. In the classical regression
setting, the closeness of $g(X)$ to $Y$ is measured via the $L_2$ risk, also
called the mean squared error (MSE), defined by
\begin{equation}\label{eq:ltwo:risk}
L_2(g)=L_{MSE}(g)=\mathbb{E}(g(X)-Y)^2.
\end{equation}
In this definition, the expectation is computed by respect to the random pair
$(X,Y)$ and might be denoted $\mathbb{E}_{X, Y}(g(X)-Y)^2$ to make this point
explicit. To maintain readability, this explicit notation will be used only in
ambiguous settings. 

Let $m$ denote the regression function of the problem, that is the function
from $\mathbb{R}^d$ to $\mathbb{R}$  given by 
\begin{equation}
  \label{eq:regression:function}
m(x)=\mathbb{E}(Y|X=x).  
\end{equation}
It is well known (see e.g. \cite{gyorfi_etal_DFTNR2002}) that the regression
function is the best model in the case of the mean squared error in the sense
that $L_2(m)$ minimizes $L_2(g)$ over the set of all measurable functions from
$\mathbb{R}^d$ to $\mathbb{R}$. 

More generally, the quality of a model is measured via a \textbf{loss
  function}, $l$, from $\mathbb{R}^2$ to $\mathbb{R}^+$. The point-wise loss
of the model $g$ is $l(g(X),Y)$ and the \textbf{risk} of the model is
\begin{equation}
  \label{eq:General:Risk}
L_l(g)=  \mathbb{E}(l(g(X),Y)).
\end{equation}
For example, the squared loss, $l_2=l_{MSE}$ is defined as $l_2(p,y)=(p-y)^2$. It
leads to the $L_{MSE}$ risk defined above as $L_{l_2}(g)=L_{MSE}(g)$.

The \textbf{optimal risk} is the infimum of $L_l$ over measurable functions,
that is
\begin{equation}
  \label{eq:Optimal:Risk}
L^*_l=\inf_{g\in \mathcal{M}(\mathbb{R}^d,\mathbb{R})}L_l(g),
\end{equation}
where $\mathcal{M}(\mathbb{R}^d,\mathbb{R})$ denotes the set of measurable
functions from $\mathbb{R}^d$ to $\mathbb{R}$. As recalled above we have
\[
L^*_{MSE}=L^*_2=L^*_{l_2}=\mathbb{E}_{X, Y}(m(X)-Y)^2=\mathbb{E}_{X, Y}\left\{(\mathbb{E}(Y|X)-Y\right)^2\}
\]

As explained in the introduction, there are practical situations in which the
$L_2$ risk is not a good way of measuring the closeness of $g(X)$ to $Y$. We
focus in this paper on the case of the mean absolute percentage error (MAPE)
as an alternative to the MSE. Let us recall that the loss function associated to
the MAPE is given by
\begin{equation}
  l_{MAPE}(p,y)=\frac{|p-y|}{|y|},
\end{equation}
with the conventions that for all $a\neq 0$, $\frac{a}{0}=\infty$ and that
$\frac{0}{0}=1$. Then the MAPE-risk of model $g$ is
\begin{equation}\label{eq:lMAPE:risk}
L_{MAPE}(g)=L_{l_{MAPE}}(g)=\mathbb{E}\left(\frac{|g(X)-Y|}{|Y|}\right).
\end{equation}
Notice that according to Fubini's theorem, $L_{MAPE}(g)<\infty$ implies in
particular that $\mathbb{E}(|g(X)|)<\infty$ and thus that interesting models
belong to $L^1(\mathbb{P}_X)$, where $\mathbb{P}_X$ is the probability measure
on $\mathbb{R}^d$ induced by $X$. 

We will also use in this paper the mean absolute error (MAE). It is based on
the absolute error loss, $l_{MAE}=l_1$ defined by
$l_{MAE}(p,y)=|p-y|$. As other risks, the MAE-risk is given by
\begin{equation}
  \label{eq:lmae:risk}
L_{MAE}(g)=L_{l_{MAE}}(g)=\mathbb{E}(|g(X)-Y|).  
\end{equation}

\section{Existence of the MAPE-regression function}\label{sec:exist-MAPE-regr}
A natural theoretical question associated to the MAPE is whether an optimal
model exists. More precisely, is there a function $m_{MAPE}$ such
that for all models $g$, $L_{MAPE}(g)\geq L_{MAPE}(m_{MAPE})$?

Obviously, we have
\[
L_{MAPE}(g)=\mathbb{E}_{X, Y}\left\{\mathbb{E}\left(\frac{|g(X)-Y|}{|Y|}\Bigg|X\right)\right\}.
\]
A natural strategy to study the existence of $m_{MAPE}$ is therefore to
consider a point-wise approximation, i.e. to minimize the conditional
expectation introduced above for each value of $x$.  In other words, we want
to solve, if possible, the optimization problem
\begin{equation}
  \label{eq:MAPE:regression:problem}
m_{MAPE}(x)=\arg\min_{m\in \mathbb{R}}  \mathbb{E}\left(\frac{|m-Y|}{|Y|}\Bigg|X=x\right),
\end{equation}
for all values of $x$. 

We show in the rest of this Section that this problem can be solved. We first
introduce necessary and sufficient conditions for the problem to involve
finite values, then we show that under those conditions, it has at least one
global solution for each $x$ and finally we introduce a simple rule to select
one of the solutions. 

\subsection{Finite values for the point-wise problem}
To simplify the analysis, let us introduce a real valued random variable $T$ and
study the optimization problem
\begin{equation}
  \label{eq:MAPE:regression:problem:nocond}
\min_{m\in \mathbb{R}}  \mathbb{E}\left(\frac{|m-T|}{|T|}\right).
\end{equation}
Depending on the distribution of $T$ and of the value of $m$,
$J(m)=\mathbb{E}\left(\frac{|m-T|}{|T|}\right)$ is not always a finite value,
excepted for $m=0$. In this latter case, for any random variable $T$, $J(0)=1$ using the
above convention.

Let us consider an example demonstrating problems that might arise for
$m\neq 0$. Let $T$ be distributed according to the uniform distribution on
$[-1,1]$. Then
\[
J(m)=\frac{1}{2}\int_{-1}^1\frac{|m-t|}{|t|}dt.
\]
If $m\in]0,1]$, we have
\begin{align*}
J(m)&=
      \frac{1}{2}\int_{-1}^0\left(1-\frac{m}{t}\right)dt+\frac{1}{2}\int_{0}^m\left(\frac{m}{t}-1\right)dt+\frac{1}{2}\int_{m}^1\left(1-\frac{m}{t}\right)dt,\\
&=\underbrace{1-m-\frac{m}{2}\int_{m}^1\frac{1}{t}dt}_{\text{finite
  part}}+\frac{m}{2}\underbrace{\left(\int_{0}^m\frac{1}{t}dt-\int_{-1}^0\frac{1}{t}dt\right)}_{+\infty},\\
&=+\infty.
\end{align*}
This example shows that when $T$ is likely to take values close to 0, then
$J(m)=\infty$ whenever $m\neq 0$. Intuitively, the only situation that leads
to finite values is when  $\frac{1}{|T|}$ as a finite expectation,
that is when the probability that $|T|$ is smaller than $\epsilon$ decreases
sufficiently quickly when $\epsilon$ goes to zero. 

 More formally, we have the following
proposition.
\begin{prop}\label{Proposition:J:finite}
$J(m)<\infty$ for all $m$ if and only if
\begin{enumerate}
\item $\mathbb{P}(T=0)=0$,
\item and
  \begin{align}
    \label{eq:condition:MAPE}
\sum_{k=1}^\infty k\mathbb{P}\left(T\in\left]\frac{1}{k+1},\frac{1}{k}\right]\right)&<\infty,&
\sum_{k=1}^\infty k\mathbb{P}\left(T\in\left[-\frac{1}{k},-\frac{1}{k+1}\right[\right)<\infty. 
  \end{align}
\end{enumerate}
If any of those conditions is not fulfilled, then $J(m)=\infty$ for all $m\neq
0$.
\end{prop}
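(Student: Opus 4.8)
The plan is to route the whole equivalence through the simpler quantity $\mathbb{E}\left(\frac{1}{|T|}\right)$, and then to analyze the latter by a layer-cake (integral-test) decomposition matching the shells appearing in \eqref{eq:condition:MAPE}. The first step relies only on the triangle inequality: since $|m|-|T|\le|m-T|\le|m|+|T|$, dividing by $|T|$ and taking expectations (all integrands being bounded below by $-1$, so the expectations are well defined with values in $]-\infty,+\infty]$) yields the two-sided bound
\begin{equation*}
|m|\,\mathbb{E}\left(\frac{1}{|T|}\right)-1\le J(m)\le |m|\,\mathbb{E}\left(\frac{1}{|T|}\right)+1,
\end{equation*}
valid for every $m$, with the conventions on division by zero handled consistently on the event $\{T=0\}$.

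From this sandwich the equivalence with a cleaner condition is immediate. If $\mathbb{E}\left(\frac{1}{|T|}\right)<\infty$, the upper bound gives $J(m)<\infty$ for all $m$. Conversely, picking any $m\neq0$, the lower bound forces $\mathbb{E}\left(\frac{1}{|T|}\right)\le\frac{J(m)+1}{|m|}<\infty$. Hence $J(m)<\infty$ for all $m$ if and only if $\mathbb{E}\left(\frac{1}{|T|}\right)<\infty$. The final assertion of the proposition also drops out: when the conditions fail we will have $\mathbb{E}\left(\frac{1}{|T|}\right)=\infty$, and the lower bound then gives $J(m)=\infty$ for every $m\neq0$.

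It remains to show $\mathbb{E}\left(\frac{1}{|T|}\right)<\infty$ if and only if conditions 1 and 2 hold. I would split the expectation over the three regions $\{T=0\}$, $\{0<|T|\le1\}$, and $\{|T|>1\}$. On $\{T=0\}$ the contribution is $+\infty\cdot\mathbb{P}(T=0)$, finite if and only if $\mathbb{P}(T=0)=0$; on $\{|T|>1\}$ the integrand is bounded by $1$, so the contribution is always finite. For the remaining region I decompose $]0,1]$ into the shells $\left]\frac{1}{k+1},\frac{1}{k}\right]$ and use that $\frac{1}{|T|}\in[k,k+1)$ there, giving
\begin{equation*}
k\,\mathbb{P}\!\left(T\in\left]\tfrac{1}{k+1},\tfrac{1}{k}\right]\right)\le\mathbb{E}\!\left(\tfrac{1}{|T|}\mathbf{1}_{\{T\in]1/(k+1),1/k]\}}\right)\le(k+1)\,\mathbb{P}\!\left(T\in\left]\tfrac{1}{k+1},\tfrac{1}{k}\right]\right).
\end{equation*}
Summing over $k$ and using $\sum_k\mathbb{P}(T\in]1/(k+1),1/k])\le1$ shows that $\mathbb{E}\left(\frac{1}{|T|}\mathbf{1}_{\{0<T\le1\}}\right)$ is finite precisely when the first series of \eqref{eq:condition:MAPE} converges; the symmetric argument on $[-1,0[$ handles the second series. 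Combining the three regions gives the claimed equivalence. The only real care needed — and the closest thing to an obstacle — is bookkeeping: keeping the division-by-zero conventions consistent on $\{T=0\}$, and verifying that the $+1$ gap between $k$ and $k+1$ in the shell bounds is absorbed by the convergent total mass $\sum_k\mathbb{P}(\cdots)\le1$, so that the two series converge or diverge together with the integral.
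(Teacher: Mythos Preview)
Your argument is correct. Both proofs ultimately reduce the question to the finiteness of $\mathbb{E}\big(\tfrac{1}{|T|}\mathbb{I}_{0<|T|\leq 1}\big)$ and then handle that via the shell decomposition on $\left]\tfrac{1}{k+1},\tfrac{1}{k}\right]$, so the second half of your proof is essentially identical to the paper's staircase-function argument (your inequalities are exactly what the paper's $g^-$ and $g^+$ encode).

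The genuine difference is the first reduction. The paper fixes $m>0$, expands $J(m)$ explicitly as
\[
\mathbb{P}(T<0)+\mathbb{P}(T>m)-\mathbb{P}(T\in]0,m])+m\,\mathbb{E}\!\left(\frac{\mathbb{I}_{T\in]0,m]}-\mathbb{I}_{T<0}-\mathbb{I}_{T>m}}{T}\right),
\]
and then argues term by term that the only possibly infinite contributions are $\mathbb{E}\big(\mathbb{I}_{T\in]0,1]}/T\big)$ and its negative counterpart. Your sandwich $|m|\,\mathbb{E}\big(\tfrac{1}{|T|}\big)-1\leq J(m)\leq |m|\,\mathbb{E}\big(\tfrac{1}{|T|}\big)+1$, obtained from the triangle inequality in one line, bypasses all of that case analysis and lands directly on the single quantity $\mathbb{E}\big(\tfrac{1}{|T|}\big)$. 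This is cleaner and avoids the sign bookkeeping the paper has to do; the only price is the slight care (which you flag) needed on $\{T=0\}$ when $m=0$, but since $J(0)=1$ is always finite and the converse direction uses an $m\neq 0$, this is harmless.
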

\begin{proof}
We have
\[
J(m)=\mathbb{E}\left(\mathbb{I}_{T=0}\frac{|m-T|}{|T|}\right)+\mathbb{E}\left(\mathbb{I}_{T>0}\frac{|m-T|}{|T|}\right)
+\mathbb{E}\left(\mathbb{I}_{T<0}\frac{|m-T|}{|T|}\right).
\]
If $\mathbb{P}(T=0)>0$ then for all $m\neq 0$, $J(m)=\infty$. Let us therefore consider
the case $\mathbb{P}(T=0)=0$. We assume $m>0$, the case $m<0$ is completely
identical. We have
\begin{align*}
 J(m)&=\mathbb{E}\left(\mathbb{I}_{T>0}\frac{|m-T|}{|T|}\right)
+\mathbb{E}\left(\mathbb{I}_{T<0}\frac{|m-T|}{|T|}\right),\\
&=\mathbb{P}(T<0)+\mathbb{P}(T> m)-\mathbb{P}(T\in
  ]0,m])+m\mathbb{E}\left(\frac{\mathbb{I}_{T\in ]0,m]}-\mathbb{I}_{T<0}-\mathbb{I}_{T>m}}{T}\right).
\end{align*}
A simple upper bounding gives
\[
0\leq m\mathbb{E}\left(\frac{\mathbb{I}_{T>m}}{T}\right)\leq \mathbb{P}(T> m),
\]
and symmetrically 
\[
0\leq m\mathbb{E}\left(-\frac{\mathbb{I}_{T<-m}}{T}\right)\leq \mathbb{P}(T<-m).
\]
This shows that $J(m)$ is the sum of finite terms and of
$m\mathbb{E}\left(\frac{\mathbb{I}_{T\in ]0,m]}-\mathbb{I}_{T\in
      [-m,0[}}{T}\right)$. Because of the symmetry of the problem, we can
focus on $\mathbb{E}\left(\frac{\mathbb{I}_{T\in ]0,m]}}{T}\right)$. It is
also obvious that $\mathbb{E}\left(\frac{\mathbb{I}_{T\in
      ]0,m]}}{T}\right)$ is finite if and only if $\mathbb{E}\left(\frac{\mathbb{I}_{T\in
      ]0,1]}}{T}\right)$ is finite. 

As pointed out above, this shows that, when $\mathbb{P}(T=0)=0$, $J(m)$ is
finite if and only if both
$\mathbb{E}\left(\frac{\mathbb{I}_{T\in ]0,1]}}{T}\right)$ and
$\mathbb{E}\left(\frac{\mathbb{I}_{T\in [-1,0[}}{T}\right)$ are finite. We
obtain slightly more operational conditions in the rest of the proof. 

Let us therefore introduce the following functions:
\begin{align*}
f^{-}_k(x)&=
  \begin{cases}
    0& \text{if }x\not\in ]\frac{1}{k+1},\frac{1}{k}],\\
    k& \text{if }x\in ]\frac{1}{k+1},\frac{1}{k}],
  \end{cases}
&f^{+}_k(x)&=
  \begin{cases}
    0& \text{if }x\not\in ]\frac{1}{k+1},\frac{1}{k}],\\
    k+1& \text{if }x\in ]\frac{1}{k+1},\frac{1}{k}],
  \end{cases}\\
g^{-}_n&=\sum_{k=1}^nf^{-}_k(x),
&g^{+}_n&=\sum_{k=1}^nf^{+}_k(x),\\
g^{-}&=\sum_{k=1}^\infty f^{-}_k(x),
&g^{+}&=\sum_{k=1}^\infty f^{+}_k(x).
\end{align*}
We have obviously for all $x\in ]0,1]$, $g^{-}(x)\leq \frac{1}{x}\leq
g^{+}(x)$. In addition
\begin{align*}
 \mathbb{E}( g^{+}_n(T))&=\sum_{k=1}^n(k+1)\mathbb{P}\left(T\in\left]\frac{1}{k+1},\frac{1}{k}\right]\right),\\
 \mathbb{E}( g^{-}_n(T))&=\sum_{k=1}^nk\mathbb{P}\left(T\in\left]\frac{1}{k+1},\frac{1}{k}\right]\right)=\mathbb{E}( g^{+}_n(T))-\mathbb{P}\left(T\in\left]\frac{1}{k+1},1\right]\right).
\end{align*}
According to the monotone convergence theorem, 
\begin{equation*}
  \mathbb{E}( g^{+}(T))=\lim_{n\rightarrow \infty}\mathbb{E}( g^{+}_n(T)).
\end{equation*}
The link between  $\mathbb{E}( g^{-}_n(T))$ and $\mathbb{E}( g^{+}_n(T))$
shows that either both $\mathbb{E}( g^{+}(T))$ and $\mathbb{E}( g^{-}(T))$ are
finite, or both are infinite. In addition, we have 
\begin{equation*}
  \mathbb{E}( g^{-}(T))\leq \mathbb{E}\left(\frac{\mathbb{I}_{T\in
      ]0,1]}}{T}\right)\leq  \mathbb{E}( g^{+}(T)), 
\end{equation*}
therefore $\mathbb{E}\left(\frac{\mathbb{I}_{T\in ]0,1]}}{T}\right)$ is finite
if and only if $\mathbb{E}( g^{-}(T))$ is finite. So a sufficient and
necessary condition for $\mathbb{E}\left(\frac{\mathbb{I}_{T\in
      ]0,1]}}{T}\right)$ to be finite is
\begin{equation*}
\sum_{k=1}^\infty k\mathbb{P}\left(T\in\left]\frac{1}{k+1},\frac{1}{k}\right]\right)<\infty. 
\end{equation*}
A symmetric derivation shows that $\mathbb{E}\left(-\frac{\mathbb{I}_{T\in ]-1,0]}}{T}\right)$ is finite
if and only if
\begin{equation*}
\sum_{k=1}^\infty k\mathbb{P}\left(T\in\left[-\frac{1}{k},-\frac{1}{k+1}\right[\right)<\infty. 
\end{equation*}
\end{proof}
The conditions of Proposition \ref{Proposition:J:finite} can be used to
characterize whether $\mathbb{P}(T\in ]0,\epsilon])$ decreases
  sufficiently quickly to ensure that $J$ is not (almost) identically equal to
  $+\infty$. For instance, if $\mathbb{P}(T\in ]0,\epsilon])=\epsilon$, then 
\[
k\mathbb{P}\left(T\in\left]\frac{1}{k+1},\frac{1}{k}\right]\right)=\frac{1}{k+1},
\]
and the sum diverges, leading to $J(m)=\infty$ (for $m\neq 0$). On the
contrary, if  $\mathbb{P}(T\in
]0,\epsilon])=\epsilon^{2}$, then  
\[
k\mathbb{P}\left(T\in\left]\frac{1}{k+1},\frac{1}{k}\right]\right)=\frac{2k+1}{k(k+1)^2},
\]
and thus the sum converges, leading to $J(m)<\infty$ for all $m$ (provided
similar conditions hold for the negative part of $T$).

\subsection{Existence of a solution for the point-wise problem}
If the conditions of Proposition \ref{Proposition:J:finite} are not fulfilled,
$J(m)$ is infinite excepted in $m=0$ and therefore
$\arg\min_{m\in\mathbb{R}}J(m)=0$. When they are fulfilled, we have to show
that $J(m)$ has at least one global minimum. This is done in the following proposition.
\begin{prop}\label{Proposition:J:asminimum}
Under the conditions of Proposition \ref{Proposition:J:finite}, $J$ is convex
and has at least one global minimum. 
\end{prop}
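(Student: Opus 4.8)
The plan is to establish convexity first, which is essentially free, and then to deduce existence of a minimizer from convexity together with a coercivity argument. First I would prove convexity. For each fixed $t\neq 0$, the map $m\mapsto \frac{|m-t|}{|t|}$ is convex, being the composition of the (convex) absolute value with an affine function and then scaled by the positive constant $\frac{1}{|t|}$; since $\mathbb{P}(T=0)=0$ this holds $\mathbb{P}_T$-almost surely. Writing $J(m)=\int \frac{|m-t|}{|t|}\,d\mathbb{P}_T(t)$ and using that every value involved is finite under the conditions of Proposition \ref{Proposition:J:finite}, I would integrate the pointwise convexity inequality and invoke linearity and monotonicity of the expectation to obtain, for $\lambda\in[0,1]$, $J(\lambda m_1+(1-\lambda)m_2)\leq \lambda J(m_1)+(1-\lambda)J(m_2)$. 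A finite convex function on $\mathbb{R}$ is automatically continuous, a fact I would record for the next step.

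Next, to prove the existence of a global minimum, I would show that $J$ is coercive, i.e. $J(m)\to+\infty$ as $|m|\to\infty$; a continuous coercive function on $\mathbb{R}$ attains its infimum. The key estimate comes from the reverse triangle inequality $|m-T|\geq |m|-|T|$, which gives
\[
\frac{|m-T|}{|T|}\geq \frac{|m|}{|T|}-1 \quad (\mathbb{P}\text{-a.s.}),
\]
and hence, after taking expectations,
\[
J(m)\geq |m|\,\mathbb{E}\!\left(\frac{1}{|T|}\right)-1.
\]
Coercivity then follows as soon as $c:=\mathbb{E}(1/|T|)$ is a strictly positive finite constant, since the right-hand side grows like $c|m|-1$.

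The main point to verify — and the only nontrivial one — is that $\mathbb{E}(1/|T|)<\infty$. This is exactly where the conditions of Proposition \ref{Proposition:J:finite} are used: they were introduced precisely to control the mass of $T$ near $0$, and the computation in the proof of that proposition already shows that $\mathbb{E}(\mathbb{I}_{T\in]0,1]}/T)$ and $\mathbb{E}(-\mathbb{I}_{T\in[-1,0[}/T)$ are both finite; the contribution of $\{|T|>1\}$ is bounded by $1$, so $c<\infty$. Strict positivity is immediate, since $1/|T|>0$ almost surely thanks to $\mathbb{P}(T=0)=0$. Combining continuity with coercivity, $J$ attains a global minimum, which is the desired conclusion. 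The coercivity step is the real engine of the argument; the convexity and continuity parts are routine, and the only place that genuinely relies on the hypotheses is the finiteness of $\mathbb{E}(1/|T|)$.
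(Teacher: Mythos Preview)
Your proof is correct and follows the same overall scheme as the paper's: establish convexity pointwise under the integral, then prove coercivity to force a minimizer. The convexity step is identical.

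The coercivity arguments differ slightly. The paper picks an interval $[a,b]$ bounded away from $0$ with $\mathbb{P}(T\in[a,b])>0$ (which exists because $\mathbb{P}(T=0)=0$) and bounds $J(m)$ from below by $\left(\frac{|m|}{b}-1\right)\mathbb{P}(T\in[a,b])$ on that event alone; this uses only $\mathbb{P}(T=0)=0$ and not the full integrability of $1/|T|$. Your route instead exploits the full strength of the hypotheses: from the proof of Proposition~\ref{Proposition:J:finite} you extract $c=\mathbb{E}(1/|T|)\in(0,\infty)$ and obtain the global linear lower bound $J(m)\geq c|m|-1$ via the reverse triangle inequality. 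Your argument is more direct and yields an explicit growth rate for $J$; the paper's argument is more localized but makes clear that coercivity itself needs nothing beyond $\mathbb{P}(T=0)=0$ (the finiteness conditions are only used to make $J$ real-valued so that convexity is meaningful). Both are perfectly valid ways to close the proof.
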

\begin{proof}
We first note that $J$ is convex. Indeed for all $t\neq 0$, $m\mapsto
\frac{|m-t|}{|t|}$ is obviously convex. Then the linearity of the expectation
allows to conclude (provided $J$ is finite everywhere as guaranteed by the
hypotheses).

As $\mathbb{P}(T=0)=0$, there is $[a,b]$, $a<b$ such that
$\mathbb{P}(T\in[a,b])>0$ with either $a>0$ or $b<0$. Let us assume $a>0$, the
other case being symmetric. Then for $t\in[a,b]$, $\frac{1}{b}\leq
\frac{1}{t}\leq \frac{1}{a}$. If $m>b$, then for $t\in[a,b]$
\begin{equation*}
\frac{|m-t|}{|t|}=\frac{m}{t}-1\geq \frac{m}{b}-1.
\end{equation*}
Then 
\begin{align*}
J(m)&\geq  \mathbb{E}\left(\frac{\mathbb{I}_{T\in [a,b]}|m-T|}{|T|}\right),\\
&\geq \left(\frac{m}{b}-1\right)\mathbb{P}(T\in[a,b]),
\end{align*}
and therefore $\lim_{m\rightarrow+\infty}J(m)=+\infty$.

Similarly, if $m<0<a$, then for $t\in[a,b]$
\begin{equation*}
\frac{|m-t|}{|t|}=1-\frac{m}{t}\geq 1-\frac{m}{b},
\end{equation*}
and then
\begin{equation*}
J(m)\geq   \left(1-\frac{m}{b}\right)\mathbb{P}(T\in[a,b]),
\end{equation*}
and therefore $\lim_{m\rightarrow-\infty}J(m)=+\infty$.

Therefore, $J$ is a coercive function and has at least a local minimum, which
is global by convexity. 
\end{proof}

\subsection{Choosing the minimum}
However, the minimum is not necessary unique, as $J$ is not strictly
convex. In general, the set of global minima will be a bounded interval of
$\mathbb{R}$. In this case, and by convention, we consider the mean value of
the interval as the optimal solution. 

As an example of such behavior, we can consider the case where $T$ is a random
variable on $\{1, 2, 3\}$, such that $\mathbb{P}(T = 1) = 0.3$,
$\mathbb{P}(T = 2) = 0.4$ and $\mathbb{P}(T = 3) = 0.3$. Then the expected
loss is
\[ 
J(m) = 0.3\times \left|m - 1\right| + 0.4 \times \left|\frac{m-2}{2}\right| +
0.3 \times \left|\frac{m-3}{3}\right| 
\]
and the figure~\ref{fig:counterExample} illustrates that there is an infinity
\begin{figure}
\centering
\includegraphics[width =0.7\linewidth]{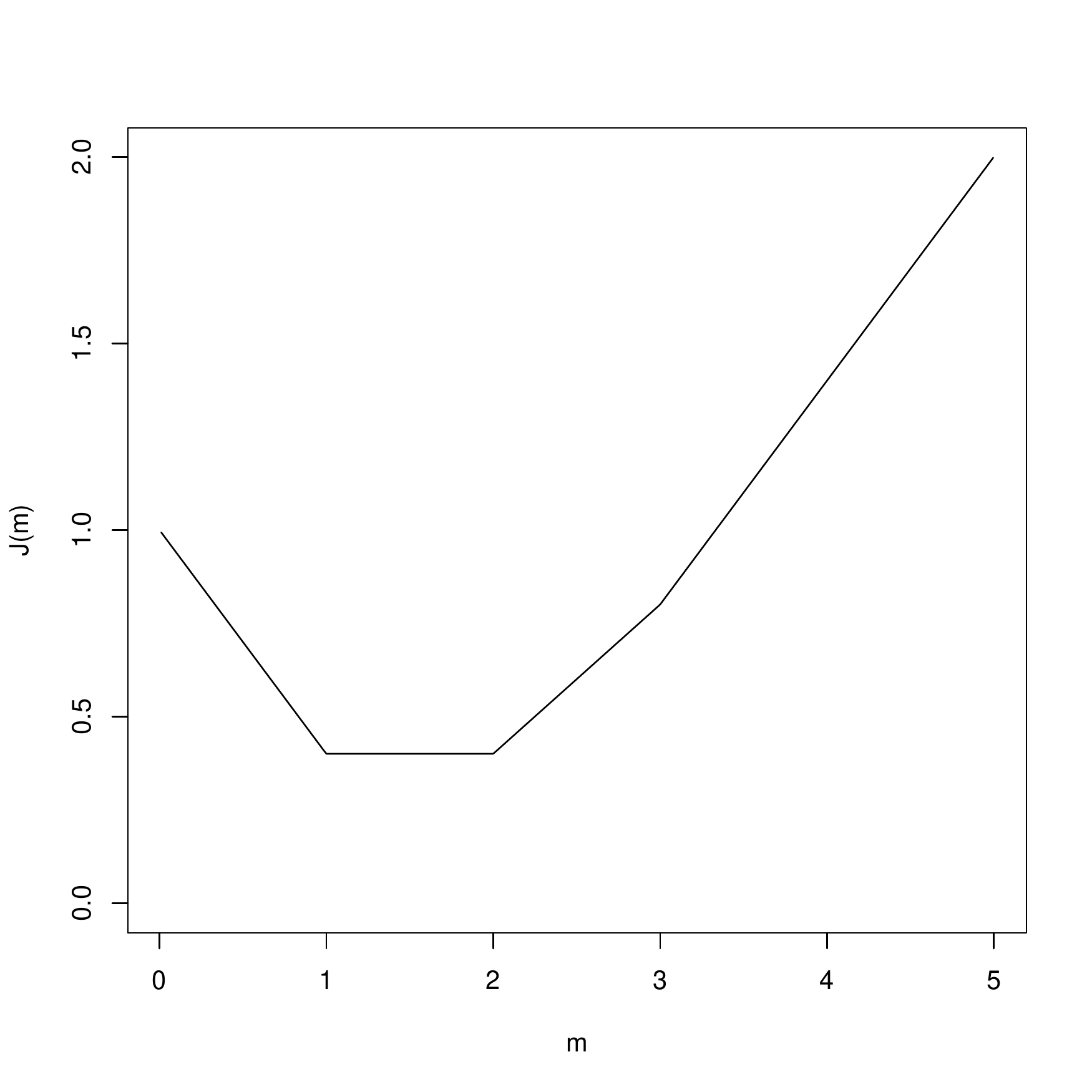}
\caption{Counterexample with an infinite number of solutions.}
\label{fig:counterExample}
\end{figure}
of solutions. Indeed when $m\in[1, 2]$, $J$ becomes
\begin{align*}
J(m)&=0.3\times (m - 1) + 0.4 \times \frac{2-m}{2} +
0.3 \times \frac{3-m}{3},\\
&=(0.3-0.2-0.1)\times m+(-0.3+0.4+0.3),\\
&=0.4.
\end{align*}
Here we define by convention $\arg\min_{m} J(m)=\frac{3}{2}$.

More generally, for any random variable $T$, we have defined a unique value
$m$, which is a global minimum of
$J(m) = \mathbb{E}\left( \left| \frac{m-T}{T}\right|\right)$. Moving back to
our problem, it ensures that the MAPE-regression function $m_{MAPE}$
introduced in \ref{eq:MAPE:regression:problem} is well defined and takes
finite values on $\mathbb{R}^d$. As $m_{MAPE}$ is point-wise optimal, it is
also globally optimal.

\section{Effects of the MAPE on complexity control}\label{sec:effects-MAPE-compl}
One of the most standard learning strategy is the Empirical Risk Minimization
(ERM) principle. We assume given a training set $D_n=(Z_i)_{1\leq i\leq
  N}=(X_i,Y_i)_{1\leq i\leq n}$  which consists in $n$ i.i.d. copies of the
random pair $Z=(X,Y)$. We assume also given a class of models, $G$, which
consists in measurable functions from $\mathbb{R}^d$ to $\mathbb{R}$. Given a
loss function $l$, we denote $L^*_{l,G}=\inf_{g\in G}L_l(g)$. 

The empirical estimate of $L_l(g)$ (called the \textbf{empirical risk}) is
given by
\begin{equation}
  \label{eq:empirical:risk}
\widehat{L}_l(g,D_n)=\frac{1}{n}\sum_{i=1}^nl(g(X_i),Y_i).  
\end{equation}
Then the ERM principle consists in choosing in the class $G$ the model that minimizes
the empirical risk, that is 
\begin{equation}
  \label{eq:ERM:model}
\widehat{g}_{l,D_n,G}=\arg\min_{g\in G}\widehat{L}_l(g,D_n).
\end{equation}
The main theoretical question associated to the ERM principle is how to
control $L_l(\widehat{g}_{l,D_n,G})$ in such a way that it converges to
$L^*_{l,G}$. An extension of this question is whether $L^*_l$ can be reached
if $G$ is allowed to depend on $n$: the ERM is said to be universally strongly
consistent if $L_l(\widehat{g}_{l,D_n,G_n})$ converges to $L^*_l$ almost
surely for any distribution of $(X,Y)$ (see Section
\ref{sec:consistency-MAPE}). 

It is well known (see e.g. \cite{gyorfi_etal_DFTNR2002} chapter 9) that ERM
consistency is related to uniform laws of large numbers (ULLN). In particular, we
need to control quantities of the following form
\begin{equation}
  \label{eq:uniform}
  P\left\{\sup_{g\in G}\left|\widehat{L}_{MAPE}(g,D_n)-L_{MAPE}(g)\right|>\epsilon\right\}.
\end{equation}
This can be done via covering numbers or via the Vapnik-Chervonenkis dimension
(VC-dim) of certain classes of functions derived from $G$. One
might think that general results about arbitrary loss functions can be used to
handle the case of the MAPE. This is not the case as those results generally
assume a uniform Lipschitz property of $l$ (see Lemma 17.6 in
\cite{ab-nnltf-99}, for instance) that is not fulfilled by the MAPE.

The objective of this section is to analyze the effects over covering numbers
(Section \ref{sec:covering-numbers}) and VC-dimension (Section
\ref{sec:vc-dimension}) of using the MAPE as the loss function. It shows also
what type of ULLN results can be obtained based on those analyses (Section
\ref{sec:exampl-unif-laws}). 

\subsection{Classes of functions}
Given a class of models, $G$, and a loss function $l$, we introduce derived
classes, $H(G,l)$ given by
\begin{equation}
H(G,l)=\{h: \mathbb{R}^d\times \mathbb{R}\rightarrow \mathbb{R}^+,\
h(x,y)=l(g(x),y)\ |\ g\in G\},
\end{equation}
and $H^+(G,l)$ given by
\begin{equation}
H^+(G,l)=\{h: \mathbb{R}^d\times \mathbb{R}\times \mathbb{R}\rightarrow \mathbb{R}^+,\
h(x,y,t)=\mathbb{I}_{t\leq l(g(x),y)}\ |\ g\in G\}.
\end{equation}

\subsection{Covering numbers}\label{sec:covering-numbers}
\subsubsection{Notations and definitions}
Let $F$ be a class of positive functions from an arbitrary set $\mathcal{Z}$
to $\mathbb{R}^+$. The supremum norm on $F$ is given by 
\[
\|f\|_{\infty}=\sup_{z\in\mathcal{Z}}|f(z)|.
\]
We also define $\|F\|_{\infty}=\sup_{f\in F}\|f\|_{\infty}$. We have obviously
\[
\forall f\in F,\forall z\in\mathcal{Z}, |f(z)|\leq \|F\|_{\infty}. 
\]
Those definitions will also be used for classes of functions with values in
$\mathbb{R}$ (not only in  $\mathbb{R}^+$), hence the absolute value. 

Let $\kappa$ be a dissimilarity on $F$, that is a positive and symmetric
function from $F^2$ to $\mathbb{R}^+$ that measures how two functions from $F$
are dissimilar (in particular $\kappa(f,f)=0$). Then $\kappa$ can be used to
characterize the complexity of $F$ by computing the $\kappa$
$\epsilon$-covering number of $F$.

\begin{definition}
Let $F$ be a class of positive functions from $\mathcal{Z}$ to $\mathbb{R}^+$ and $\kappa$ a dissimilarity on $F$. For $\epsilon>0$ and $p$ a positive integer, a size $p$ $\epsilon$-cover of $F$ with respect to $\kappa$
is a finite collection $f_1,\ldots,f_p$ of elements of $F$ such that for all
$f\in F$
\[
\min_{1\leq i\leq p}\kappa(f,f_i)<\epsilon.
\]
\end{definition}

Then the $\kappa$ $\epsilon$-covering number of $F$ is defined as follow.

\begin{definition}
  Let $F$ be a class of positive functions from $\mathcal{Z}$ to
  $\mathbb{R}^+$, $\kappa$ be a dissimilarity on $F$ and $\epsilon>0$. Then
  the $\kappa$ $\epsilon$-covering number of $F$,
  $\mathcal{N}(\epsilon,F,\kappa)$, is the size of the smallest $\kappa$
  $\epsilon$-cover of $F$. If such a cover does not exists, the covering
  number is $\infty$.
\end{definition}

The behavior of $\mathcal{N}(\epsilon,F,\kappa)$ with respect to $\epsilon$
characterizes the complexity of $F$ as seen through $\kappa$. If the growth
when $\epsilon\rightarrow 0$ is slow enough (for an adapted choice of
$\kappa$), then some uniform law of large numbers applies (see Lemma
\ref{lemma:ulln:supremum}).

\subsubsection{Supremum covering numbers}\label{sec:supr-cover-numb} 
Supremum covering numbers are based on the supremum norm, that is
\[
\|f_1-f_2\|_{\infty}=\sup_{z\in\mathcal{Z}}|f_1(z)-f_2(z)|.
\]
For classical loss functions, the supremum norm is generally ill-defined on
$H(G,l)$. For instance let $h_1$ and $h_2$ be two functions from $H(G,l_2)$,
generated by $g_1$ and $g_2$ (that is $h_i(x,y)=(g_i(x)-y)^2$). Then
\begin{align*}
|h_1(x,y)-h_2(x,y)|&=|(g_1(x)-y)^2-(g_2(x)-y)^2|\\
&=|g_1(x)^2-g_2(x)^2+2y(g_2(x)-g_1(x))|.
\end{align*}
If $G$ is not reduced to a single function, then there are two functions $g_1$
and $g_2$ and a value of $x$ such that $g_1(x)\neq g_2(x)$. Then
$\sup_{y}|h_1(x,y)-h_2(x,y)|=\infty$. 

A similar situation arises for the MAPE. Indeed, let $h_1$ and $h_2$ be two functions from
$H_{MAPE}$, generated by $g_1$ and $g_2$ in $G$ (that is $h_i(x,y)=\frac{|g_i(x)-y|}{|y|}$). Then
\[
\|h_1-h_2\|_{\infty}=\sup_{(x,y)\in \mathbb{R}^d\times \mathbb{R}}\frac{||g_1(x)-y|-|g_2(x)-y||}{|y|}. 
\]
Thus unless $G$ is very restricted there is always $x$, $g_1$ and $g_2$ such that $g_1(x)\neq 0$ and $|g_2(x)|\neq |g_1(x)|$. Then
for $y>0$, $||g_1(x)-y|-|g_2(x)-y||$ has the general form
$\alpha+\beta y$ with $\alpha>0$ and thus $\lim_{y\rightarrow
  0+}\frac{||g_1(x)-y|-|g_2(x)-y||}{|y|}=+\infty$. 

A simple way to obtain finite values for the supremum norm is to restrict its
definition to a subset of $\mathcal{Z}$. This corresponds in practice to support assumptions on the data
$(X,Y)$. Hypotheses on $G$ are also needed in general. In this latter case,
one generally assumes $\|G\|_{\infty}<\infty$. In the former case, assumptions
depends on the nature of the loss function.

For instance in the case of the MSE, it is natural to assume that $|Y|$ is
\textbf{upper bounded} by $Y_U$ with probability one. If 
$(x,y)\in \mathbb{R}^d\times [-Y_U,Y_U]$ then
\[
|h_1(x,y)-h_2(x,y)|\leq 2\|G\|_{\infty}(\|G\|_\infty+Y_U),
\]
and therefore the supremum norm is well defined on this subset. 

In the case of the MAPE, a natural hypothesis is that $|Y|$ is \textbf{lower
  bounded} by $Y_L$ (almost surely). If $(x,y)\in\mathbb{R}^d\times
(]-\infty,-Y_L]\cup[Y_L,\infty[)$, then
\[
|h_1(x,y)-h_2(x,y)|\leq 2+2\frac{\|G\|_{\infty}}{Y_L},
\]
and therefore the supremum norm is well defined.

The case of the MAE is slightly different. Indeed when $x$ is fixed, then for
sufficiently large positive values of $y$,
$||g_1(x)-y|-|g_2(x)-y||=|g_1(x)-g_2(x)|$. Similarly, for sufficient large
negative values of $y$, $||g_1(x)-y|-|g_2(x)-y||=|g_1(x)-g_2(x)|$. Thus, the
supremum norm is well defined on $H(G,l_{MAE})$ if
e.g. $\|G\|_{\infty}<\infty$. In addition, we have the following proposition.
\begin{prop}\label{prop:covering:infinite}
  Let $G$ be an arbitrary class of models with $\|G\|<\infty$ and let
  $Y_L>0$. Let $\|.\|^{Y_L}_{\infty}$ denote the supremum norm on $H(G,l_{MAPE})$
  defined by\footnote{Notice that while we make explicit here the dependence
    of the supremum norm on the support on which it is calculated, we will not
    do that in the rest of the paper to avoid cluttered notations. This
    restriction will be clear from the context.  }
\[
\|h\|^{Y_L}_{\infty}=\sup_{x\in\mathbb{R}^d,y\in ]-\infty,-Y_L]\cup[Y_L,\infty[}h(x,y).
\]
Let $\epsilon>0$, then
\[
\mathcal{N}(\epsilon,H(G,l_{MAPE}),\|.\|^{Y_L}_\infty)\leq
  \mathcal{N}(\epsilon Y_L,H(G,l_{MAE}),\|.\|_{\infty}).
\]
\end{prop}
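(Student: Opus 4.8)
The plan is to transfer an $\epsilon Y_L$-cover of $H(G,l_{MAE})$ into an $\epsilon$-cover of $H(G,l_{MAPE})$, exploiting the fact that on the restricted support $|y|\geq Y_L$ the MAPE pointwise difference is exactly the MAE pointwise difference scaled by $\frac{1}{|y|}\leq\frac{1}{Y_L}$. The crucial observation is that the two losses share a common numerator: if $g_1,g_2\in G$ generate the MAPE losses $h_1,h_2$ and the MAE losses $\widetilde{h}_1,\widetilde{h}_2$, then the quantity $||g_1(x)-y|-|g_2(x)-y||$ appears in both differences. The first step is therefore to establish, for any such pair of models, the key inequality
\[
\|h_1-h_2\|^{Y_L}_{\infty}=\sup_{x\in\mathbb{R}^d,\,|y|\geq Y_L}\frac{||g_1(x)-y|-|g_2(x)-y||}{|y|}\leq\frac{1}{Y_L}\sup_{x\in\mathbb{R}^d,\,y\in\mathbb{R}}||g_1(x)-y|-|g_2(x)-y||=\frac{1}{Y_L}\|\widetilde{h}_1-\widetilde{h}_2\|_{\infty},
\]
which holds because replacing $\frac{1}{|y|}$ by its bound $\frac{1}{Y_L}$ on the restricted support, and then enlarging the supremum to all of $\mathbb{R}^d\times\mathbb{R}$, can only increase the right-hand side.

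Given this inequality, I would set $p=\mathcal{N}(\epsilon Y_L,H(G,l_{MAE}),\|.\|_{\infty})$ and fix a minimal $\epsilon Y_L$-cover $\widetilde{h}_1,\dots,\widetilde{h}_p$ of $H(G,l_{MAE})$. Each $\widetilde{h}_i$ is generated by some model $g_i\in G$; I would select one such $g_i$ per cover element and let $h_i$ be the MAPE loss it generates. The claim is that $h_1,\dots,h_p$ is an $\epsilon$-cover of $H(G,l_{MAPE})$ for $\|.\|^{Y_L}_{\infty}$. To check it, take an arbitrary $h\in H(G,l_{MAPE})$ generated by some $g\in G$, and let $\widetilde{h}$ be the MAE loss generated by the \emph{same} $g$. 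By the covering property there is an index $i$ with $\|\widetilde{h}-\widetilde{h}_i\|_{\infty}<\epsilon Y_L$; applying the key inequality to the pair $(g,g_i)$ then gives $\|h-h_i\|^{Y_L}_{\infty}\leq\frac{1}{Y_L}\|\widetilde{h}-\widetilde{h}_i\|_{\infty}<\epsilon$, as required. This yields $\mathcal{N}(\epsilon,H(G,l_{MAPE}),\|.\|^{Y_L}_{\infty})\leq p$, which is the claimed bound.

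The step requiring the most care is the bookkeeping of the correspondence between losses and underlying models: the maps $g\mapsto\widetilde{h}$ and $g\mapsto h$ need not be injective, so the cover elements must be tracked through an explicit choice of model $g_i$, and one must verify that the \emph{same} pair $(g,g_i)$ simultaneously controls the MAE and the MAPE distance. This coupling is precisely what the shared numerator provides, so once the key inequality is phrased model-wise rather than loss-wise, the transfer is immediate. I would also note that the finiteness of $\|\widetilde{h}_1-\widetilde{h}_2\|_{\infty}$ under the hypothesis $\|G\|_{\infty}<\infty$, already established in the discussion preceding the proposition, guarantees that the MAE cover is a genuine finite object and that no infinite quantity is concealed in the bound.
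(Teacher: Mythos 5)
Your proposal is correct and follows essentially the same route as the paper's own proof: pull back a minimal $\epsilon Y_L$-cover of $H(G,l_{MAE})$ to models in $G$, push those models forward to $H(G,l_{MAPE})$, and control the MAPE supremum distance by bounding $\frac{1}{|y|}$ by $\frac{1}{Y_L}$ on the restricted support and enlarging the supremum to all of $\mathbb{R}^d\times\mathbb{R}$. Your explicit remarks on choosing one representative model per cover element and on coupling the same pair $(g,g_i)$ across both losses are sound refinements of the bookkeeping the paper leaves implicit, but they do not change the argument.
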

\begin{proof}
Let $\epsilon>0$ and let $h'_1,\ldots,h'_k$ be a minimal $\epsilon Y_L$
cover of $H(G,l_{MAE})$ (thus $k=\mathcal{N}(\epsilon
Y_L,H(G,l_{MAE}),\|.\|_{\infty})$). Let $g_1,\ldots,g_k$ be the functions 
from $G$ associated to $h'_1,\ldots,h'_k$ and let $h_1,\ldots,h_k$ be the
corresponding functions in $H(G,l_{MAPE})$. Then $h_1,\ldots,h_k$ is a
$\epsilon$ cover of $H(G,l_{MAPE})$. 

Indeed let $h$ be an arbitrary element of $H(G,l_{MAPE})$ associated $g$ and
let $h'$ be the corresponding function in $H(G,l_{MAE})$. Then for a given
$j$, $\|h'-h'_j\|_{\infty}\leq\epsilon  Y_L $. We have then
\[
\|h-h_j\|^{Y_L}_{\infty}=\sup_{x\in\mathbb{R}^d,y\in ]-\infty,-Y_L]\cup[Y_L,\infty[}\frac{||g(x)-y|-|g_j(x)-y||}{|y|}.
\]
For all $y\in ]-\infty,-Y_L]\cup[Y_L,\infty[$, $\frac{1}{|y|}\leq
\frac{1}{Y_L}$ and thus
\[
\|h-h_j\|^{Y_L}_{\infty}\leq \sup_{x\in\mathbb{R}^d,y\in ]-\infty,-Y_L]\cup[Y_L,\infty[}\frac{||g(x)-y|-|g_j(x)-y||}{Y_L}.
\]
Then
\begin{align*}
\sup_{x\in\mathbb{R}^d,y\in
  ]-\infty,-Y_L]\cup[Y_L,\infty[}||g(x)-y|-|g_j(x)-y||&\leq   \sup_{x\in\mathbb{R}^d,y\in
  \mathbb{R}}||g(x)-y|-|g_j(x)-y||\\
&\leq\|h'-h'_j\|_{\infty}, \\
&\leq \epsilon Y_L.
\end{align*}
and thus
\[
\|h-h_j\|^{Y_L}_{\infty}\leq \epsilon,
\]
which allows to conclude. 
\end{proof}
This Proposition shows that the covering numbers associated to a class of
functions $G$ under the MAPE are related to the covering numbers of the same
class under the MAE, as long as $Y$ stays away from too small values.

\subsubsection{$L_p$ covering numbers}
$L_p$ covering numbers are based on a data dependent norm. Based on the
training set $D_n$, we define for $p\geq 1$ :
\begin{equation}\label{eq:lp:data:dependent}
\|f_1-f_2\|_{p,D_n}=\left(\frac{1}{n}\sum_{i=1}^n|f_1(Z_i)-f_2(Z_i)|^p\right)^{\frac{1}{p}}.
\end{equation}
We have a simple proposition:
\begin{prop}\label{prop:covering:lp}
Let $G$ be an arbitrary class of models and $D_n$ a data set such that
$\forall i$, $Y_i\neq 0$, then for all $p\geq 1$, 
\[
  \mathcal{N}(\epsilon,H(G,l_{MAPE}),\|.\|_{p,D_n})\leq
  \mathcal{N}(\epsilon\min_{1\leq i\leq N}|Y_i|,H(G,l_{MAE}),\|.\|_{p,D_n}).
\]
\end{prop}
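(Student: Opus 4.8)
The plan is to mimic the proof of Proposition \ref{prop:covering:infinite}, replacing the restricted supremum norm by the data-dependent $L_p$ norm. The key simplification is that $\|.\|_{p,D_n}$ only involves the finitely many sample points $Z_i=(X_i,Y_i)$, so the constant $\min_i|Y_i|$ is a strictly positive minimum over a finite set (guaranteed positive by the hypothesis $Y_i\neq 0$ for all $i$), and it plays exactly the role that the lower bound $Y_L$ played in the supremum case.

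First I would fix a minimal $L_p$ $\epsilon\min_i|Y_i|$-cover $h'_1,\ldots,h'_k$ of $H(G,l_{MAE})$ with respect to $\|.\|_{p,D_n}$, so that $k=\mathcal{N}(\epsilon\min_i|Y_i|,H(G,l_{MAE}),\|.\|_{p,D_n})$. Let $g_1,\ldots,g_k$ be the generating models and $h_1,\ldots,h_k$ the corresponding functions in $H(G,l_{MAPE})$. The claim is that $h_1,\ldots,h_k$ form an $\epsilon$-cover of $H(G,l_{MAPE})$ for $\|.\|_{p,D_n}$, which immediately yields the stated inequality between covering numbers.

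The heart of the argument is a pointwise comparison at each sample. For any model $g$ with associated $h\in H(G,l_{MAPE})$ and $h'\in H(G,l_{MAE})$, and for any index $j$, we have at every $Z_i$
\[
|h(Z_i)-h_j(Z_i)|=\frac{\bigl||g(X_i)-Y_i|-|g_j(X_i)-Y_i|\bigr|}{|Y_i|}=\frac{|h'(Z_i)-h'_j(Z_i)|}{|Y_i|}\leq \frac{|h'(Z_i)-h'_j(Z_i)|}{\min_i|Y_i|},
\]
using $|Y_i|\geq \min_i|Y_i|$ at each $i$. Raising to the power $p$, averaging over $i$, and taking the $p$-th root pulls the uniform constant $1/\min_i|Y_i|$ out of the norm, giving $\|h-h_j\|_{p,D_n}\leq \frac{1}{\min_i|Y_i|}\|h'-h'_j\|_{p,D_n}$. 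Choosing $j$ so that $\|h'-h'_j\|_{p,D_n}<\epsilon\min_i|Y_i|$ then forces $\|h-h_j\|_{p,D_n}<\epsilon$, as required.

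I do not expect a genuine obstacle. Unlike the supremum case, the $L_p$ data-dependent norm evaluates only at the $Z_i$, so there is no issue of $|y|$ approaching zero, and the single constant $1/\min_i|Y_i|$ controls every term simultaneously. The only point requiring a word of care is noting that $\min_i|Y_i|>0$ — which is exactly what the hypothesis $Y_i\neq 0$ (together with finiteness of the sample) ensures — so that the rescaled radius $\epsilon\min_i|Y_i|$ is positive and the chosen cover of $H(G,l_{MAE})$ is meaningful.
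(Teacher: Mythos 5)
Your proof is correct and is exactly the adaptation the paper intends: the paper's own proof of this proposition consists of the single remark that it is ``similar to the one of Proposition~\ref{prop:covering:infinite}'', and your argument --- covering $H(G,l_{MAE})$ at radius $\epsilon\min_i|Y_i|$, transferring the cover to $H(G,l_{MAPE})$, and bounding each sample term by dividing by $|Y_i|\geq\min_i|Y_i|$ before taking the $L_p$ average --- is precisely that adaptation, with the correct observation that finiteness of the sample makes the minimum strictly positive.
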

\begin{proof}
The proof is similar to the one of Proposition \ref{prop:covering:infinite}. 
\end{proof}
This Proposition is the adaptation of Proposition \ref{prop:covering:infinite} to $L_p$
covering numbers. 

\subsection{VC-dimension}\label{sec:vc-dimension}
A convenient way to bound covering numbers it to use the Vapnik-Chervonenkis
dimension (VC dimension). We recall first the definition of the shattering
coefficients of a function class. 
\begin{definition}
Let $F$ be a class of functions from $\mathbb{R}^d$ to $\{0,1\}$ and $n$ be a
positive integer. Let $\{z_1,\ldots,z_n\}$ be a set of $n$ points of
$\mathbb{R}^d$. Let 
\[
s(F,\{z_1,\ldots,z_n\})=|\{\theta\in\{0,1\}^n|\exists
f\in F,\ \theta=(f(z_1),\ldots,f(z_n))\}|,
\]
that is the number of different binary vectors of size $n$ that are generated
by functions of $F$ when they are applied to $\{z_1,\ldots,z_n\}$. 

The set $\{z_1,\ldots,z_n\}$ is \textbf{shattered} by $F$ if
$s(F,\{z_1,\ldots,z_n\})=2^n$. 

The $n$-th shatter coefficient of $F$ is 
\[
\mathcal{S}(F,n)=\max_{\{z_1,\ldots,z_n\}\subset \mathbb{R}^d}s(F,\{z_1,\ldots,z_n\}).
\]
\end{definition}
Then the VC-dimension is defined as follows.
\begin{definition}
Let $F$ be a class of functions from $\mathbb{R}^d$ to $\{0,1\}$. The
VC-dimension of $F$ is defined by
\[
VC_{dim}(F)=\sup\{n\in\mathbb{N}^+\mid \mathcal{S}(F,n)=2^n\}.
\]
\end{definition}
Interestingly, replacing the MAE by the MAPE does not increase the VC-dim of the
relevant class of functions.
\begin{prop}\label{prop:vc:dim}
Let $G$ be an arbitrary class of models. We have
\[
VC_{dim}(H^+(G,l_{MAPE}))\leq VC_{dim}(H^+(G,l_{MAE})).
\]  
\end{prop}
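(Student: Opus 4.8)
The plan is to reduce shattering under the MAPE to shattering under the MAE by absorbing the factor $\frac{1}{|y|}$ into the threshold variable $t$. The key observation is the elementary equivalence, valid for any $y\neq 0$,
\[
\mathbb{I}_{t\leq \frac{|g(x)-y|}{|y|}}=\mathbb{I}_{t|y|\leq |g(x)-y|},
\]
so that the map $\Phi(x,y,t)=(x,y,t|y|)$ sends a MAPE test point to a MAE test point on which \emph{every} $g\in G$ produces exactly the same indicator value. First I would take a set $\{z_1,\dots,z_n\}$, with $z_i=(x_i,y_i,t_i)$, that is shattered by $H^+(G,l_{MAPE})$, and fix, for each pattern $\theta\in\{0,1\}^n$, a model $g_\theta\in G$ realizing it, i.e.\ with $\mathbb{I}_{t_i\leq l_{MAPE}(g_\theta(x_i),y_i)}=\theta_i$ for all $i$. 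The goal is to exhibit a set of $n$ points shattered by $H^+(G,l_{MAE})$, which forces $VC_{dim}(H^+(G,l_{MAE}))\geq n$ and hence the claimed inequality.

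For every index with $y_i\neq 0$ I would simply replace $z_i$ by $\Phi(z_i)=(x_i,y_i,t_i|y_i|)$; by the equivalence above, the MAE indicator of $g_\theta$ at $\Phi(z_i)$ equals its MAPE indicator at $z_i$, namely $\theta_i$, for every $\theta$, so these coordinates transfer for free. The delicate part, and the main obstacle, is the set of indices with $y_i=0$: there $\Phi$ collapses the threshold to $0$, and the point $(x_i,0,0)$ is useless for the MAE since its indicator $\mathbb{I}_{0\leq|g(x_i)|}$ is identically $1$. I would first note that an index with $y_i=0$ can occur in a shattered set only when $t_i>1$: otherwise the convention $\tfrac{0}{0}=1$ makes the MAPE indicator identically $1$, and a coordinate stuck at $1$ cannot be shattered. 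When $t_i>1$, the MAPE indicator at $z_i$ reduces to $\mathbb{I}_{g(x_i)\neq 0}$, so $\theta_i=1$ forces $g_\theta(x_i)\neq 0$ and $\theta_i=0$ forces $g_\theta(x_i)=0$.

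To handle these coordinates I would exploit the finiteness of the pattern set. For each such $i$, set $s_i=\min_{\theta:\theta_i=1}|g_\theta(x_i)|$, a minimum of finitely many strictly positive numbers, hence $s_i>0$, and take the MAE point $w_i=(x_i,0,s_i)$. Then for any $\theta$: if $\theta_i=1$ we have $|g_\theta(x_i)|\geq s_i$, so the MAE indicator $\mathbb{I}_{s_i\leq|g_\theta(x_i)|}$ equals $1$; if $\theta_i=0$ we have $g_\theta(x_i)=0$, so $|g_\theta(x_i)|=0<s_i$ and the indicator equals $0$; in both cases the value is $\theta_i$. Collecting the points $\Phi(z_i)$ (for $y_i\neq 0$) and $w_i$ (for $y_i=0$) yields a set of $n$ points on which each $g_\theta$ produces exactly the pattern $\theta$ under the MAE, so $H^+(G,l_{MAE})$ shatters it. Taking $n=VC_{dim}(H^+(G,l_{MAPE}))$ (or letting $n\to\infty$ in the infinite case) gives the inequality. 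I expect the only real work to be the careful treatment of the $y_i=0$ points just described; the $y_i\neq 0$ part is a direct change of variable.
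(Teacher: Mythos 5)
Your proof is correct and follows essentially the same route as the paper: the change of variable $t\mapsto t|y|$ for $y\neq 0$, and for $y=0$ the observation that shattering forces $t>1$ and that the threshold $\min_{\theta:\theta_i=1}|g_\theta(x_i)|>0$ (a minimum over finitely many positive values) separates the two cases under the MAE. The treatment of the $y_i=0$ coordinates is, if anything, stated slightly more cleanly than in the paper's own proof.
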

\begin{proof}
  Let us consider a set of $k$ points
  shattered by $H^+(G,l_{MAPE})$, $(v_1,\ldots,v_k)$, $v_j=(x_j,y_j,t_j)$. By
  definition, for each binary vector $\theta\in\{0,1\}^k$, there is a function
  $h_{\theta}\in H(G,l_{MAPE})$ such that
  $\forall j,\ \mathbb{I}_{t\leq h_{\theta}(x,y)}(x_j,y_j,t_j)=\theta_j$. Each
  $h_{\theta}$ corresponds to a $g_{\theta}\in G$, with
  $h_{\theta}(x,y)=\frac{|g_{\theta}(x)-y|}{|y|}$. 

  We define a new set of $k$ points, $(w_1,\ldots,w_k)$ as follows. If
  $y_j\neq 0$, then $w_j=(x_j,y_j,|y_j|t_j)$. For those points and for any $g\in G$, 
\[
\mathbb{I}_{t_j\leq \frac{|g(x_j)-y_j|}{|y_j|}}=\mathbb{I}_{|y_j|t_j\leq |g(x_j)-y_j|},
\]
and thus
$\mathbb{I}_{t\leq h_{\theta}(x,y)}(x_j,y_j,t_j)=\mathbb{I}_{t\leq
  h'_{\theta}(x,y)}(x_j,y_j,|y_j|t_j)$ where $h'_\theta(x,y)=|g_{\theta}(x)-y|$.

Let us now consider the case of $y_j=0$. By definition $h_{\theta}(x_j,0)=1$
when $g_\theta(x_j)=0$ and $h_{\theta}(x_j,0)=\infty$ if
$g_\theta(x_j)\neq 0$. As the set of points is shattered $t_j>1$ (or
$h_{\theta}(x_j,0)<t_j$ will never be possible). In addition when $\theta_j=1$
then $g_\theta(x_j)\geq 0$ and when $\theta_j=0$ then $g_\theta(x_j)=0$. Then
let $w_j=(x_j,0,\min_{\theta, \theta_j=1}|g_\theta(x_j)|)$. Notice that
$\min_{\theta, \theta_j=1}|g_\theta(x_j)|>0$ (as there is a finite number of
binary vectors of dimension $k$). For $\theta$ such that $\theta_j=0$, we have
$h'_\theta(x_j,y_j)=|g_{\theta}(x_j)-y_j|=0$ and thus
$h'_\theta(x_j,y_j)<\min_{\theta, \theta_j=1}|g_\theta(x_j)|$, that is
$\mathbb{I}_{t\leq h'_{\theta}(x,y)}(w_j)=0=\theta_j$. For $\theta$ such that
$\theta_j=1$, $h'_\theta(x_j,y_j)=|g_{\theta}(x_j)|$ and thus
$h'_\theta(x_j,y_j)\geq \min_{\theta, \theta_j=1}|g_\theta(x_j)|$. Then
$\mathbb{I}_{t\leq h'_{\theta}(x,y)}(w_j)=1=\theta_j$.

This shows that for each binary vector $\theta\in\{0,1\}^k$, there is a
function $h'_{\theta}\in H(G,l_{MAE})$ such that
$\forall j,\ \mathbb{I}_{t\leq h'_{\theta}(x,y)}(w_j)=\theta_j$. And thus the
$w_j$ are shattered by $H^+(G,l_{MAE})$.

Therefore $VC_{dim}(H^+(G,l_{MAE}))\geq k$. If
$VC_{dim}(H^+(G,l_{MAPE}))<\infty$, then we can take
$k=VC_{dim}(H^+(G,l_{MAPE}))$ to get the conclusion. 

If $VC_{dim}(H^+(G,l_{MAPE}))=\infty$ then $k$ can be chosen arbitrarily large
and therefore $VC_{dim}(H^+(G,l_{MAE}))=\infty$.
\end{proof}
Using theorem 9.4 from \cite{gyorfi_etal_DFTNR2002}, we can bound the $L^p$
covering number with a VC-dim based value.  If
$VC_{dim}(H^+(G,l))\geq 2$, $p\geq 1$, and $0<\epsilon<\frac{\|H(G,l)\|_{\infty}}{4}$,
then
\begin{equation}\label{eq:covering:VC}
\mathcal{N}(\epsilon,H(G,l),\|.\|_{p,D_n})\leq 3\left(\frac{2e\|H(G,l)\|_{\infty}^p}{\epsilon^p}\log \frac{3e\|H(G,l)\|_{\infty}^p}{\epsilon^p}\right)^{VC_{dim}(H^+(G,l))}.
\end{equation}
Therefore, in practice, both the covering numbers and the VC-dimension of MAPE
based classes can be derived from the VC-dimension of MAE based classes. 

\subsection{Examples of Uniform Laws of Large Numbers}\label{sec:exampl-unif-laws}
We show in this section how to apply some of the results obtained above. 

Rephrased with our notations, Lemme 9.1 from \cite{gyorfi_etal_DFTNR2002} is
\begin{lemma}[Lemma 9.1 from \cite{gyorfi_etal_DFTNR2002}]\label{lemma:ulln:supremum}
For all $n$, let $F_n$ be a class of functions from $\mathcal{Z}$ to $[0,B]$
and let $\epsilon>0$. Then
\[
\mathbb{P}\left\{\sup_{f\in
    F_n}\left|\frac{1}{n}\sum_{j=1}^nf(Z_j)-\mathbb{E}(f(Z))\right|\geq\epsilon\right\}\leq
2\mathcal{N}\left(\frac{\epsilon}{3},F_n,\|.\|_{\infty}\right)e^{-\frac{2n\epsilon^2}{9B^2}}.
\]
If in addition 
\[
\sum_{n=1}^\infty\mathcal{N}\left(\frac{\epsilon}{3},F_n,\|.\|_{\infty}\right)<\infty,
\]
for all $\epsilon$, then 
\begin{equation}
\sup_{f\in
    F_n}\left|\frac{1}{n}\sum_{j=1}^nf(Z_j)-\mathbb{E}(f(Z))\right|\rightarrow
  0 \quad (n\rightarrow\infty)\quad a.s.
\end{equation}
\end{lemma}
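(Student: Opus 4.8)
The plan is to establish the concentration inequality first, by a covering (discretization) argument combined with Hoeffding's inequality, and then to derive the almost-sure convergence from the summability hypothesis via the Borel--Cantelli lemma.

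For the inequality, I would fix $n$, set $p=\mathcal{N}(\epsilon/3,F_n,\|.\|_\infty)$, and choose a minimal $\epsilon/3$-cover $f_1,\ldots,f_p$ of $F_n$ in supremum norm. The key observation is that a sup-norm cover controls the empirical average and the expectation simultaneously: for any $f\in F_n$ there is an index $i$ with $\|f-f_i\|_\infty<\epsilon/3$, whence both $|\frac{1}{n}\sum_j f(Z_j)-\frac{1}{n}\sum_j f_i(Z_j)|\leq\epsilon/3$ and $|\mathbb{E}(f(Z))-\mathbb{E}(f_i(Z))|\leq\epsilon/3$. A triangle inequality then shows that if the supremum deviation over $F_n$ is at least $\epsilon$, then the deviation of at least one cover element $f_i$ is at least $\epsilon/3$, so that $\{\sup_{f\in F_n}|\frac{1}{n}\sum_j f(Z_j)-\mathbb{E}(f(Z))|\geq\epsilon\}\subseteq\bigcup_{i=1}^p\{|\frac{1}{n}\sum_j f_i(Z_j)-\mathbb{E}(f_i(Z))|\geq\epsilon/3\}$. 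A union bound over the $p$ cover elements reduces the problem to controlling each fixed $f_i$.

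For each fixed $f_i$, the random variables $f_i(Z_1),\ldots,f_i(Z_n)$ are i.i.d.\ and take values in $[0,B]$, so Hoeffding's inequality gives $\mathbb{P}\{|\frac{1}{n}\sum_j f_i(Z_j)-\mathbb{E}(f_i(Z))|\geq\epsilon/3\}\leq 2e^{-2n(\epsilon/3)^2/B^2}=2e^{-2n\epsilon^2/(9B^2)}$. Multiplying by $p$ yields exactly the claimed bound. I expect this discretization step to be the main (and only) delicate part: one must use the sup-norm estimate on both the empirical and the population side, and keep careful track of the constants, namely the factor $3$ hidden in the cover radius and the factor $2$ coming from the two-sided Hoeffding bound.

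For the almost-sure statement, I would fix $\epsilon>0$ and sum the inequality over $n$. Since $e^{-2n\epsilon^2/(9B^2)}\leq 1$, the summability hypothesis gives $\sum_n\mathbb{P}\{\sup_{f\in F_n}|\frac{1}{n}\sum_j f(Z_j)-\mathbb{E}(f(Z))|\geq\epsilon\}\leq 2\sum_n\mathcal{N}(\epsilon/3,F_n,\|.\|_\infty)<\infty$, so the Borel--Cantelli lemma shows that the event $\{\sup_{f\in F_n}|\frac{1}{n}\sum_j f(Z_j)-\mathbb{E}(f(Z))|\geq\epsilon\}$ occurs only finitely often, almost surely. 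Applying this to the sequence $\epsilon_k=1/k\to 0$ and intersecting the resulting countably many probability-one events yields $\limsup_n\sup_{f\in F_n}|\frac{1}{n}\sum_j f(Z_j)-\mathbb{E}(f(Z))|\leq\epsilon_k$ for every $k$ on a set of full probability, hence the supremum converges to $0$ almost surely. This last part is routine once the concentration inequality is in hand.
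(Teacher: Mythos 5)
Your proof is correct. Note that the paper itself offers no proof of this statement: it is quoted verbatim as Lemma~9.1 of \cite{gyorfi_etal_DFTNR2002}, so there is nothing internal to compare against. Your argument --- a minimal $\epsilon/3$ sup-norm cover controlling the empirical mean and the expectation simultaneously, a triangle inequality reducing the uniform deviation event to a union over the $p$ cover elements of deviations of size $\epsilon/3$, Hoeffding's inequality $2e^{-2n(\epsilon/3)^2/B^2}$ for each fixed element, and Borel--Cantelli over $\epsilon_k=1/k$ for the almost-sure part --- is precisely the standard proof given in that reference, with the constants ($3$ from the cover radius, $2$ from the two-sided bound) correctly accounted for.
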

A direct application of Lemma
\ref{lemma:ulln:supremum} to $H(G,l)$ gives
\[
\mathbb{P}\left\{\sup_{g\in
    G}\left|\widehat{L}_{l}(g,D_n)-L_{l}(g)\right|\geq\epsilon\right\}\leq
2\mathcal{N}\left(\frac{\epsilon}{3},H(G,l),\|.\|_{\infty}\right)e^{-\frac{2n\epsilon^2}{9B^2}},
\]
provided the support of the supremum norm coincides with the support of
$(X,Y)$ and functions in $H(G,l)$ are bounded.

In order to fulfill this latter condition, we have to resort on the same strategy
used to define in a proper way the supremum norm on $H(G,l)$.

As in Section \ref{sec:supr-cover-numb} let $\|G\|_{\infty}<\infty$ and let
$Y_U<\infty$ be such that $|Y|\leq Y_U$ almost surely, then
\begin{equation*}
\forall g\in G, (g(X)-Y)^2\leq \|G\|_{\infty}^2+Y_U^2\quad (a.s.),
\end{equation*}
and
\begin{equation*}
\forall g\in G, |g(X)-Y|\leq \|G\|_{\infty}+Y_U\quad (a.s.).
\end{equation*}
Then if $B\geq \|G\|_{\infty}^2+Y_U^2$ (resp. $B\geq \|G\|_{\infty}+Y_U$),
Lemma \ref{lemma:ulln:supremum} applies to $H(G,l_{MSE})$ (resp. to
$H(G,l_{MAE})$). 

Similar results can be obtained for the MAPE. Indeed let us assume that
$|Y|\geq Y_L>0$ almost surely. Then if $\|G\|_{\infty}$ is finite, 
\begin{equation*}
\forall g\in G, \frac{|g(X)-Y|}{|Y|}\leq 1+ \frac{\|G\|_{\infty}}{Y_L}\quad (a.s.),
\end{equation*}
and therefore for $B\geq 1+ \frac{\|G\|_{\infty}}{Y_L}$, Lemma
\ref{lemma:ulln:supremum} applies to $H(G,l_{MAPE})$. 

This discussion shows that $Y_L$, the lower bound on $|Y|$, plays a very
similar role for the MAPE as the role played by $Y_U$, the upper bound on
$|Y|$, for the MAE and the MSE. A very similar analysis can be made when using
the $L_p$ covering numbers, on the basis of Theorem 9.1 from
\cite{gyorfi_etal_DFTNR2002}. It can also be combined with the results
obtained on the VC-dimension. Rephrased with our notations, Theorem 9.1 from \cite{gyorfi_etal_DFTNR2002} is
\begin{theorem}[Theorem 9.1 from \cite{gyorfi_etal_DFTNR2002}]\label{theorem:ulln:lp}
Let $F$ be a class of functions from $\mathcal{Z}$ to $[0,B]$. Then for
$\epsilon>0$ and $n>0$
\[
\mathbb{P}\left\{\sup_{f\in
    F}\left|\frac{1}{n}\sum_{j=1}^nf(Z_j)-\mathbb{E}(f(Z))\right|\geq\epsilon\right\}\leq
8\mathbb{E}_{D_n}\left\{\mathcal{N}\left(\frac{\epsilon}{8},F,\|.\|_{p,D_n}\right)\right\}e^{-\frac{n\epsilon^2}{128B^2}}.
\]
The expectation of the covering number is taken over the data set $D_n=(Z_i)_{1\leq
  i\leq n}$. 
\end{theorem}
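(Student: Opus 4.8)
Since this statement is quoted from \cite{gyorfi_etal_DFTNR2002}, a self-contained argument would reproduce the classical two-step symmetrization proof (in the style of Vapnik--Chervonenkis and Pollard), combining a ghost-sample argument, sign randomization, and a covering-number reduction followed by Hoeffding's inequality. The plan is to first prove the bound with the empirical $L_1$ covering number and then upgrade to $L_p$: since $\|f_1-f_2\|_{1,D_n}\leq\|f_1-f_2\|_{p,D_n}$ for $p\geq 1$, any $\frac{\epsilon}{8}$-cover with respect to $\|.\|_{p,D_n}$ is also a $\frac{\epsilon}{8}$-cover with respect to $\|.\|_{1,D_n}$, so $\mathcal{N}(\frac{\epsilon}{8},F,\|.\|_{1,D_n})\leq\mathcal{N}(\frac{\epsilon}{8},F,\|.\|_{p,D_n})$ and it suffices to control the left-hand probability by the $L_1$ covering number.

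First I would introduce a \emph{ghost sample} $D_n'=(Z_1',\ldots,Z_n')$, an independent copy of $D_n$, and establish the first symmetrization inequality
\[
\mathbb{P}\left\{\sup_{f\in F}\left|\frac{1}{n}\sum_{j=1}^n f(Z_j) - \mathbb{E}(f(Z))\right| \geq \epsilon\right\} \leq 2\,\mathbb{P}\left\{\sup_{f\in F}\left|\frac{1}{n}\sum_{j=1}^n f(Z_j) - \frac{1}{n}\sum_{j=1}^n f(Z_j')\right| \geq \frac{\epsilon}{2}\right\}.
\]
This holds whenever the empirical average on the ghost sample is within $\frac{\epsilon}{2}$ of the true mean with probability at least $\frac{1}{2}$; because each $f$ takes values in $[0,B]$, the variance of $f(Z_j')$ is at most $B^2/4$, so Chebyshev's inequality gives this as soon as $n\epsilon^2\geq 2B^2$. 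For $n\epsilon^2<2B^2$ the stated bound is vacuous, since then its right-hand side is at least $8e^{-1/64}>1$, so the inequality holds trivially and I may assume $n\epsilon^2\geq 2B^2$.

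Next I would symmetrize with random signs. Let $\sigma_1,\ldots,\sigma_n$ be i.i.d. Rademacher variables independent of the data; since $Z_j$ and $Z_j'$ are i.i.d., each difference $f(Z_j)-f(Z_j')$ is symmetric, so
\[
\mathbb{P}\left\{\sup_{f\in F}\left|\frac{1}{n}\sum_{j=1}^n \bigl(f(Z_j)-f(Z_j')\bigr)\right| \geq \frac{\epsilon}{2}\right\} = \mathbb{P}\left\{\sup_{f\in F}\left|\frac{1}{n}\sum_{j=1}^n \sigma_j\bigl(f(Z_j)-f(Z_j')\bigr)\right| \geq \frac{\epsilon}{2}\right\}.
\]
I would then condition on the realized samples and replace $F$ by a minimal $\frac{\epsilon}{8}$-cover in the empirical $L_1$ norm: each $f$ lies within $\frac{\epsilon}{8}$ of some cover element, which absorbs the approximation error and leaves a reduced threshold for the finitely many cover elements. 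For each fixed cover element the quantity $\frac{1}{n}\sum_j \sigma_j(f(Z_j)-f(Z_j'))$ is, conditionally, a normalized sum of independent mean-zero terms ranging in $[-B,B]$, so Hoeffding's inequality yields a sub-Gaussian tail of the form $2\exp(-cn\epsilon^2/B^2)$. A union bound over the at most $\mathcal{N}(\frac{\epsilon}{8},F,\|.\|_{1,D_n})$ cover elements, followed by taking expectation over the data, collects the prefactor $8$ and the exponent $n\epsilon^2/(128B^2)$.

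The hard part is the bookkeeping of constants through the two symmetrizations and the choice of cover radius: one must choose the radius ($\frac{\epsilon}{8}$ here) so that the accumulated approximation error stays below the threshold that was already halved by the first inequality, and verify that the residual threshold feeds into Hoeffding with exactly the range $[-B,B]$ and the $1/n$ normalization to produce the exponent $n\epsilon^2/(128B^2)$. A secondary technical point is justifying that covering on the original $n$-point sample suffices after the supremum has been symmetrized over the combined $2n$ points; this is handled by noting that the cover elements are fixed functions in $F$ and that the sign randomization acts independently on the paired differences, so the Hoeffding step applies uniformly once the finite cover is fixed.
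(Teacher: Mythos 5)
The paper offers no proof of this statement at all: it is quoted verbatim (with the $L_1$ covering number generalized to $L_p$) from Györfi et al., so there is nothing internal to compare against. Your outline is the standard Pollard/Vapnik--Chervonenkis symmetrization proof that the cited source itself uses, and the overall plan — reduce $L_p$ to $L_1$ via $\|f_1-f_2\|_{1,D_n}\leq\|f_1-f_2\|_{p,D_n}$, ghost-sample symmetrization justified by Chebyshev (with the small-$n$ case dismissed because the bound is then vacuous), Rademacher randomization, covering, Hoeffding, union bound — is the right one and the constants $8$ and $128$ do come out of exactly this accounting.

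The one step that would fail as you have written it is the covering step. After sign randomization you are still working with the paired sums $\frac{1}{n}\sum_j\sigma_j\bigl(f(Z_j)-f(Z_j')\bigr)$, but you propose to cover $F$ only in $\|\cdot\|_{1,D_n}$, i.e., on the original $n$ points. Such a cover controls $\frac{1}{n}\sum_j|f(Z_j)-f_i(Z_j)|$ but says nothing about $\frac{1}{n}\sum_j|f(Z_j')-f_i(Z_j')|$, so the approximation error on the ghost half cannot be absorbed; your stated justification (that the cover elements are fixed functions and the signs act independently) does not address this. The fix in the source is an extra reduction \emph{before} covering: from $\bigl|\frac{1}{n}\sum_j\sigma_j(f(Z_j)-f(Z_j'))\bigr|\geq\frac{\epsilon}{2}$ one passes, by the triangle inequality, a union bound, and the identical distribution of the two halves, to $2\,\mathbb{P}\bigl\{\sup_f\bigl|\frac{1}{n}\sum_j\sigma_jf(Z_j)\bigr|\geq\frac{\epsilon}{4}\bigr\}$. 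Only the original sample then remains, the $\frac{\epsilon}{8}$-cover on $D_n$ absorbs half of the $\frac{\epsilon}{4}$ threshold, and Hoeffding applied to $\sigma_jf(Z_j)\in[-B,B]$ at level $\frac{\epsilon}{8}$ yields $2\exp\bigl(-\frac{n\epsilon^2}{128B^2}\bigr)$; the prefactor $8=2\cdot2\cdot2$ collects the two symmetrizations and this splitting step. With that correction your sketch matches the cited proof.
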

As for Lemma \ref{lemma:ulln:supremum}, we bound $\|H(G,l)\|_\infty$ via assumptions
on $G$ and on $Y$. For instance for the MAE, we have
\begin{multline}
  \label{eq:MAE:ULLN}
\mathbb{P}\left\{\sup_{g\in
    G}\left|\widehat{L}_{MAE}(g,D_n)-L_{MAE}(g)\right|\geq\epsilon\right\}\leq\\
8\mathbb{E}_{D_n}\left\{\mathcal{N}\left(\frac{\epsilon}{8},H(G,l_{MAE}),\|.\|_{p,D_n}\right)\right\}e^{-\frac{n\epsilon^2}{128(\|G\|_{\infty}+Y_U)^2}},
\end{multline}
and for the MAPE
\begin{multline}
  \label{eq:MAPE:ULLN}
\mathbb{P}\left\{\sup_{g\in
    G}\left|\widehat{L}_{MAPE}(g,D_n)-L_{MAPE}(g)\right|\geq\epsilon\right\}\leq\\
8\mathbb{E}_{D_n}\left\{\mathcal{N}\left(\frac{\epsilon}{8},H(G,l_{MAPE}),\|.\|_{p,D_n}\right)\right\}e^{-\frac{n\epsilon^2Y_L^2}{128(1+\|G\|_{\infty})^2}}.
\end{multline}
Equation \eqref{eq:MAPE:ULLN} can be combined with results from Propositions
\ref{prop:covering:lp} or \ref{prop:vc:dim} to allow a comparison between the
MAE and the MAPE. For instance, using the VC-dimension results, the right hand
side of equation \eqref{eq:MAE:ULLN} is bounded above by
\begin{equation}
  \label{eq:MAE:bound}
24  \left(\frac{2e(\|G\|_{\infty}+Y_U)^p}{\epsilon^p}\log \frac{3e(\|G\|_{\infty}+Y_U))^p}{\epsilon^p}\right)^{VC_{dim}(H^+(G,l_{MAE}))}e^{-\frac{n\epsilon^2}{128(\|G\|_{\infty}+Y_U)^2}},
\end{equation}
while the right hand
side of equation \eqref{eq:MAPE:ULLN} is bounded above by
\begin{equation}
  \label{eq:MAPE:bound}
24  \left(\frac{2e(1+\|G\|_{\infty})^p}{Y_L^p\epsilon^p}\log
  \frac{3e(1+\|G\|_{\infty}))^p}{Y_L^p\epsilon^p}\right)^{VC_{dim}(H^+(G,l_{MAPE}))}e^{-\frac{n\epsilon^2Y_L^2}{128(1+\|G\|_{\infty})^2}}.
\end{equation}
In order to obtain almost sure uniform convergence of $\widehat{L}_{l}(g,D_n)$
to $L_{l}(g)$ over $G$, those right hand side quantities must be summable
(this allows one to apply the Borel-Cantelli Lemma). For fixed values of the
VC dimension, of $\|G\|_{\infty}$, $Y_L$ and $Y_U$ this is always the case. If
those quantities are allowed to depend on $n$, then it is obvious, as 
in the case of the supremum covering number, that $Y_U$ and $Y_L$ play
symmetric roles for the MAE and the MAPE. Indeed for the MAE, a fast growth of
$Y_U$ with $n$ might prevent the bounds to be summable. For instance, if $Y_U$
grows faster than $\sqrt{n}$, then $\frac{n}{(\|G\|_{\infty}+Y_U)^2}$ does not
converges to zero and the series is not summable. Similarly, if $Y_L$
converges too quickly to zero, for instance as $\frac{1}{\sqrt{n}}$, then
$\frac{nY_L^2}{(1+\|G\|_{\infty})^2}$ does not converge to zero and the series
is not summable. The following Section goes into more details about those
conditions in the case of the MAPE.

\section{Consistency and the MAPE}\label{sec:consistency-MAPE}
We show in this section that one can build on the ERM principle a strongly
consistent estimator of $L^*_{MAPE}$ with minimal hypothesis on $(X,Y)$ (and
thus almost universal). 
\begin{theorem}\label{theorem:consistency-mape}
Let $Z=(X,Y)$ be a random pair taking values in
$\mathbb{R}^d\times\mathbb{R}$ such that $|Y|\geq Y_L>0$ almost surely ($Y_L$
is a fixed real number). Let $(Z_n)_{n\geq 1}=(X_n,Y_n)_{n\geq 1}$ be a series
of independent copies of $Z$. 

Let $(G_n)_{n\geq 1}$ be a series of classes of measurable functions from
$\mathbb{R}^d$ to $\mathbb{R}$, such that:
\begin{enumerate}
\item $G_{n}\subset G_{n+1}$;
\item $\bigcup_{n\geq 1}G_n$ is dense in
the set of $L^1(\mu)$ functions from $\mathbb{R}^d$ to $\mathbb{R}$ for any
probability measure $\mu$;
\item for all $n$, $V_n=VC_{dim}(H^+(G_n,l_{MAPE}))<\infty$;
\item for all $n$, $\|G_n\|_{\infty}<\infty$.
\end{enumerate}
If in addition
\[
\lim_{n\rightarrow\infty}\frac{V_n\|G_n\|_{\infty}^2\log \|G_n\|_{\infty}}{n}=0,
\]
and there is $\delta>0$ such that 
\[
\lim_{n\rightarrow\infty}\frac{n^{1-\delta}}{\|G_n\|_{\infty}^2}=\infty,
\]
then $L_{MAPE}(\widehat{g}_{l_{MAPE},G_n,D_n})$ converges almost surely to
$L^*_{MAPE}$. 
\end{theorem}
\begin{proof}
We use the standard decomposition between estimation error and approximation
error. More precisely, for $g\in G$, a class of functions, 
\[
L_{MAPE}(g)-L^*_{MAPE}=\underbrace{L_{MAPE}(g)-L^*_{MAPE,G}}_{\text{estimation
    error}}+\underbrace{L^*_{MAPE,G}-L^*_{MAPE}}_{\text{approximation error}}. 
\]
We handle first the approximation error. As pointed out in Section
\ref{sec:setting}, $L_{MAPE}(g)<\infty$ implies that $g\in
L^1(\mathbb{P}_X)$. Therefore we can assume there is a series $(g^*_k)_{k\geq
  1}$ of functions from $L^1(\mathbb{P}_X)$ such that 
\[
L_{MAPE}(g^*_k)\leq L^*_{MAPE}+\frac{1}{k},
\]
by definition of $L^*_{MAPE}$ as an infimum.

Let us consider two models $g_1$ and $g_2$. For arbitrary $x$ and $y$, we have
\begin{align*}
|g_1(x)-y|&\leq |g_1(x)-g_2(x)|+|g_2(x)-y|,\\
|g_2(x)-y|&\leq |g_2(x)-g_1(x)|+|g_1(x)-y|,
\end{align*}
and thus
\begin{align*}
|g_1(x)-y|-|g_2(x)-y|&\leq |g_1(x)-g_2(x)|,\\
|g_2(x)-y|-|g_1(x)-y|&\leq |g_2(x)-g_1(x)|,
\end{align*}
and therefore 
\[
||g_1(x)-y|-|g_2(x)-y||\leq |g_1(x)-g_2(x)|. 
\]
Then 
\[
\left|\mathbb{E}_{X, Y}\left\{\frac{|g_1(X)-Y|}{|Y|}\right\}-\mathbb{E}_{X, Y}\left\{\frac{|g_2(X)-Y|}{|Y|}\right\}\right|\leq \mathbb{E}_{X, Y}\left\{\frac{|g_1(X)-g_2(X)|}{|Y|}\right\}.
\]
As $|Y|\geq Y_L$ almost surely, 
\[
\left|\mathbb{E}_{X, Y}\left\{\frac{|g_1(X)-Y|}{|Y|}\right\}-\mathbb{E}_{X, Y}\left\{\frac{|g_2(X)-Y|}{|Y|}\right\}\right|\leq\frac{1}{Y_L}\mathbb{E}_{X}\left\{|g_1(X)-g_2(X)|\right\},
\]
and thus
\[
|L_{MAPE}(g_1)-L_{MAPE}(g_2)|\leq\frac{1}{Y_L}\mathbb{E}_{X}\left\{|g_1(X)-g_2(X)|\right\}.
\]
As $\bigcup_{n\geq 1}G_n$ is dense in $L^1(\mathbb{P}_X)$ there is a series
$(h^*_k)_{k\geq 1}$ of functions of $\bigcup_{n\geq 1}G_n$ such that
$\mathbb{E}_{X}\left\{|h^*_k(X)-g^*_k(X)|\right\}\leq \frac{Y_L}{k}$. Then
$|L_{MAPE}(h^*_k)-L_{MAPE}(g^*_k)|\leq \frac{1}{Y_L}\frac{Y_L}{k}$ and thus
$L_{MAPE}(h^*_k)\leq L_{MAPE}(g^*_k)+\frac{1}{k}\leq
L^*_{MAPE}+\frac{2}{k}$.
Let $n_k=\min\{n\mid h^*_k\in G_n\}$. By definition,
$L^*_{MAPE,G_{n_k}}\leq L_{MAPE}(h^*_k)$.

Let $\epsilon>0$. Let $k$ be such that $\frac{2}{k}\leq
\epsilon$. Then $L_{MAPE}(g^*_k)\leq L^*_{MAPE}+\epsilon$ and
$L^*_{MAPE,G_{n_k}}\leq L^*_{MAPE}+\epsilon$. Let $n\geq n_k$. As $G_n$ is an
increasing series of sets, $L^*_{MAPE,G_{n}}\leq L^*_{MAPE,G_{n_k}}$ and thus
for all $n\geq n_k$, $L^*_{MAPE,G_{n}}\leq L^*_{MAPE}+\epsilon$. This shows
that $\lim_{n\rightarrow\infty}L^*_{MAPE,G_{n}}=L^*_{MAPE}$. 

The estimation error is handled via the complexity control techniques studied
in the previous Section. Indeed, according to Theorem \ref{theorem:ulln:lp},
we have (for $p=1$)
\[
\mathbb{P}\left\{\sup_{g\in
    G_n}\left|\widehat{L}_{MAPE}(g,D_n)-L_{MAPE}(g)\right|\geq\epsilon\right\}\leq D(n,\epsilon),
\]
with
\[
D(n,\epsilon)=8\mathbb{E}\left\{\mathcal{N}\left(\frac{\epsilon}{8},H(G_n,l_{MAPE}),\|.\|_{1,D_n}\right)\right\}e^{-\frac{n\epsilon^2Y_L^2}{128(1+\|G_n\|_{\infty})^2}}.
\]
Then using equation \eqref{eq:MAPE:bound} 
\[
D(n,\epsilon)\leq  24  \left(\frac{2e(1+\|G_n\|_{\infty})^p}{\epsilon Y_L}\log
  \frac{3e(1+\|G_n\|_{\infty}))}{\epsilon Y_L}\right)^{V_n}e^{-\frac{n\epsilon^2Y_L^2}{128(1+\|G_n\|_{\infty})^2}}.
\]
Using the fact that $\log(x)\leq x$, we have
\[
D(n,\epsilon)\leq  24  \left(\frac{3e(1+\|G_n\|_{\infty})}{\epsilon Y_L}\right)^{2V_n}e^{-\frac{n\epsilon^2Y_L^2}{128(1+\|G_n\|_{\infty})^2}},
\]
and
\begin{align*}
D(n,\epsilon)&\leq 24
                   \exp\left(-\frac{n\epsilon^2Y_L^2}{128(1+\|G_n\|_{\infty})^2}+2V_n\log\frac{3e(1+\|G_n\|_{\infty})}{\epsilon
                   Y_L}\right),\\
&\leq 24\exp\left(-\frac{n}{(1+\|G_n\|_{\infty})^2}\left(\frac{\epsilon^2Y_L^2}{128}-\frac{2V_n(1+\|G_n\|_{\infty})^2\log\frac{3e(1+\|G_n\|_{\infty})}{\epsilon
                   Y_L}}{n}\right)\right).
\end{align*}
As $\lim_{n\rightarrow\infty}\frac{V_n\|G_n\|_{\infty}^2\log
  \|G_n\|_{\infty}}{n}=0$,
\[
\lim_{n\rightarrow\infty}\frac{2V_n(1+\|G_n\|_{\infty})^2\log\frac{3e(1+\|G_n\|_{\infty})}{\epsilon
                   Y_L}}{n}=0.
\]
As $\lim_{n\rightarrow\infty}\frac{n^{1-\delta}}{\|G_n\|_{\infty}^2}=\infty$,
\[
\lim_{n\rightarrow\infty}\frac{n^{1-\delta}}{(1+\|G_n\|_{\infty})^2}=\infty.
\]
Therefore, for $n$ sufficiently large, $D(n,\epsilon)$ is dominated by a term
of the form
\[
\alpha\exp(-\beta n^{\delta}),
\]
with $\alpha>0$ and $\beta>0$ (both depending on $\epsilon$). This allows to
conclude that $\sum_{n\geq 1}D(n,\epsilon)<\infty$. Then the Borel-Cantelli theorem
implies that 
\[
\lim_{n\rightarrow\infty}\sup_{g\in
    G_n}\left|\widehat{L}_{MAPE}(g,D_n)-L_{MAPE}(g)\right|=0 \quad (a.s.).
\]
The final part of the estimation error is handled in a traditional way. 
Let $\epsilon>0$. There is $N$ such that $n\geq N$ implies 
\[
\sup_{g\in
    G_n}\left|\widehat{L}_{MAPE}(g,D_n)-L_{MAPE}(g)\right|\leq \epsilon.
\]
Then
$\widehat{L}_{MAPE}(g,D_n)\leq L_{MAPE}(g)+\epsilon$. By definition
\[
\widehat{L}_{MAPE}(\widehat{g}_{l_{MAPE},G_n,D_n},D_n)\leq\widehat{L}_{MAPE}(g,D_n),
\]
and thus for all $g$,
\[
\widehat{L}_{MAPE}(\widehat{g}_{l_{MAPE},G_n,D_n},D_n)\leq L_{MAPE}(g)+\epsilon.
\]
By taking the infimum on $G_n$, we have therefore
\[
\widehat{L}_{MAPE}(\widehat{g}_{l_{MAPE},G_n,D_n},D_n)\leq L^*_{MAPE,G_n}+\epsilon.
\]
Applying again the hypothesis,
\[
\widehat{L}_{MAPE}(\widehat{g}_{l_{MAPE},G_n,D_n},D_n)\geq L_{MAPE}(\widehat{g}_{l_{MAPE},G_n,D_n})-\epsilon,
\]
and therefore
\[
L_{MAPE}(\widehat{g}_{l_{MAPE},G_n,D_n})\leq L^*_{MAPE,G_n}+2\epsilon.
\]
As a consequence
\[
\lim_{n\rightarrow\infty}|L_{MAPE}(\widehat{g}_{l_{MAPE},G_n,D_n})-L^*_{MAPE,G_n}|=0\quad
(a.s.).
\]
The combination of this result with the approximation result allows us to
conclude.
\end{proof}
Notice that several aspects of this proof are specific to the MAPE. This is
the case of the approximation part which has to take care of $Y$ taking small
values. This is also the case of the estimation part which uses results from
Section \ref{sec:effects-MAPE-compl} that are specific to the MAPE.

\section{MAPE kernel regression}\label{sec:MAPE-kern-regr}
The previous Sections have been dedicated to the analysis of the theoretical
aspects of MAPE regression. In the present Section, we show how to implement MAPE
regression and we compare it to MSE/MAE regression. 

On a practical point of view, building a MAPE regression model consists in
minimizing the empirical estimate of the MAPE 
over a class of models $G_n$, that is to solve
\[
\widehat{g}_{l_{MAPE},G_n,D_n}=\arg\min_{g\in G_n}\frac{1}{n}\sum_{i=1}^n\frac{|g(x_i)-y_i|}{|y_i|},
\]
where the $(x_i,y_i)_{1\leq i\leq n}$ are the realizations of the random
variables $(X_i,Y_i)_{1\leq i\leq n}$.

Optimization wise, this is simply a particular case of \emph{median regression}
(which is in turn a particular case of \emph{quantile regression}). Indeed,
the quotient by $\frac{1}{|y_i|}$ can be seen as a fixed weight and therefore,
any quantile regression implementation that supports instance weights can be
used to find the optimal model. This is for example the case of
\texttt{quantreg} R package \cite{koenker2013quantreg}, among others. Notice that when $G_n$
corresponds to linear models, the optimization problem is a simple
\emph{linear programming} problem that can be solved by e.g. interior point
methods~\cite{boyd2004convex}. 

For some complex models, instance weighting is not immediate. As an example of
MAPE-ing a classical model we show in this section how to turn kernel
quantile regression into kernel MAPE regression. Notice that kernel regression
introduces regularization and thus is not a direct form of ERM. Extending our
theoretical results to the kernel case remains an open question. 

\subsection{From quantile regression to MAPE regression}\label{sec:NPtheory}
\subsubsection{Quantile regression}
Let us assume given a Reproducing Kernel Hilbert Space (RKHS), $\mathcal{H}$, of
functions from $\mathbb{R}^d$ to $\mathbb{R}$ (notice that $\mathbb{R}^d$
could be replaced by an arbitrary space $\mathcal{X}$). The associated kernel
function is denoted $k$ and the mapping between $\mathbb{R}^d$ and
$\mathcal{H}$, $\phi$. As always, we have $k(x, x') = \langle \phi(x),
\phi(x')\rangle$. 

The standard way of building regression models based on a RKHS consists in
optimizing a regularized version of an empirical loss, i.e., in solving an
optimization problem of the form
\begin{equation}
  \label{eq:rkhs:general}
  \min_{f \in \mathcal{H},b\in\mathbb{R}} \sum_{i = 1}^n l(f(x_i)+b,y_i)) + \frac{\lambda}{2} \| f \|^2_{\mathcal{H}}.
\end{equation}
Notice that the reproducing property of $\mathcal{H}$ implies that there is
$w\in \mathcal{H}$ such that $f(x)=\langle w,\phi(x)\rangle$. 

In particular, quantile regression can be kernelized via an appropriate choice
for $l$. Indeed, let $\tau \in [0;1]$
and let $\rho_\tau$ be the \textit{check-function}, introduced in \cite{koenker1978regression}: 
\[ \rho_\tau(\xi) = \left\{ 
\begin{matrix}
\tau \xi & \mbox{ if } \xi \geq 0 \\
(\tau - 1) \xi & \mbox{ otherwise }\\
\end{matrix}
\right.\]
The check-function is also called the \emph{pinball loss}. Then, the kernel
quantile optimization problem, treated in \cite{takeuchi2006nonparametric,
  li2007quantile}, is defined by:
\begin{equation}
\label{eq:NPmae}
\min_{f \in \mathcal{H},b\in\mathbb{R}} \sum_{i = 1}^n \rho_\tau(y_i - f(x_i)-b) + \frac{\lambda}{2} \| f \|^2_{\mathcal{H}},
\end{equation}
where $\lambda > 0$ handles the trade-off between the data fitting term and the
regularization term. The value of $\tau$ gives the quantile that the model $f$
is optimized for: for instance $\tau=\frac{1}{2}$ corresponds to the median. 

\subsection{MAPE primal problem}
To consider the case of the MAPE, one can change the equation \eqref{eq:NPmae} to \eqref{eq:NPMAPE}:
\begin{equation}
\label{eq:NPMAPE}
\min_{f \in \mathcal{H}} \sum_{i = 1}^n \frac{\rho_\tau(y_i - f(x_i)-b)}{|y_i|} + \frac{\lambda}{2} \| f \|^2_{\mathcal{H}}.
\end{equation}
Notice that for the sake of generality, we do not specify the value of
$\tau$ in this derivation: thus equation \eqref{eq:NPMAPE} can be seen as a
form of ``relative quantile''. However, in the simulation study in Section
\ref{sec:simulation-study}, we limit ourselves to the standard MAPE, that is
to $\tau=\frac{1}{2}$. The practical relevance of the ``relative quantile''
remains to be assessed.  

Using the standard way of handling absolute values and using  $f(x) = \langle
\phi(x), w \rangle$, we can rewrite the regularization problem
\eqref{eq:NPMAPE} as a (primal) optimization problem:
\begin{eqnarray}
\min_{w, b, \xi, \xi^\star} & C \sum_{i = 1}^n \frac{\tau \xi_i + (1 - \tau) \xi_i^\star}{|y_i|}  + \frac{1}{2} \| w \|^2,  \label{eq:primal}\\
\mbox{subject to} 	& y_i - \langle \phi(x_i) , w \rangle - b \leq |y_i| \xi_i, \forall i, 		 \nonumber\\
			& \langle \phi(x_i) , w \rangle + b - y_i  \leq |y_i|\xi_i^\star, \forall i, 	\nonumber\\
			& \xi_i \geq 0, \forall i,									 \nonumber\\
			& \xi_i^\star \geq 0, \forall i, 								\nonumber 
\end{eqnarray}
where $C = \frac{1}{n\lambda}$.

\subsubsection{MAPE dual problem}
Let us denote $\theta=(w,b,\xi, \xi^\star)$ the vector regrouping all the variables
of the primal problem. We denote in addition:
\begin{eqnarray*}
h(\theta) & = & C \sum_{i = 1}^n \frac{\tau \xi_i + (1 - \tau) \xi_i^\star}{|y_i|}  + \frac{1}{2} \| w \|^2, \\
\forall i,\ g_{i,1}(\theta) & = & y_i - \langle \phi(x_i) , w \rangle - b - |y_i|\xi_i,\\
\forall i,\ g_{i,2}(\theta) & = & \langle \phi(x_i) , w \rangle + b - y_i  -   |y_i|\xi_i^\star,  \\
\forall i,\ g_{i,3}(\theta) & = & - \xi_i, \\
\forall i,\ g_{i,4}(\theta) & = & - \xi_i^\star.\\
\end{eqnarray*}
Then the Wolfe Dual of problem \eqref{eq:primal} is given by:
\begin{eqnarray}
\max_{\theta, u} & h(\theta) + \sum_{i = 1}^n \left(u_{i,1}g_{i,1}(\theta) + u_{i,2}g_{i,2}(\theta) + u_{i,3}g_{i,3}(\theta) + u_{i,4}g_{i,4}(\theta)\right), \label{eq:Wolfe0}\\
\mbox{s. t.} 	& \nabla h(\theta) + \sum_{i = 1}^n \left(u_{i,1}\nabla g_{i,1}(\theta) + u_{i,2}\nabla g_{i,2}(\theta) + u_{i,3}\nabla g_{i,3}(\theta) + u_{i,4}\nabla g_{i,4}(\theta)\right) = 0, \nonumber\\
			& u_{i,1}, u_{i,2}, u_{i,3}, u_{i,4} \geq 0, \forall i, \nonumber
\end{eqnarray}
where the $u_{i,k}$ are the Lagrange multipliers. Some algebraic manipulations
show that problem \eqref{eq:Wolfe0} is equivalent to problem \eqref{eq:Wolfe}:
\begin{eqnarray}
\label{eq:Wolfe}
\max_{\theta, u} & h(\theta) + \sum_{i = 1}^n \left(u_{i,1}g_{i,1}(\theta) + u_{i,2}g_{i,2}(\theta) + u_{i,3}g_{i,3}(\theta) + u_{i,4}g_{i,4}(\theta)\right), \\
\mbox{s. t.} 	&w + \sum_{i = 1}^n(  u_{i,2}-u_{i,1}) \phi(x_i) = 0, 	\label{constraint:c1}	\\
			& \sum_{i = 1}^n (u_{i,2} - u_{i,1}) = 0, 					\label{constraint:c2}	\\
			& \forall i,\ \frac{C\tau}{|y_i|} - |y_i| u_{i,1} - u_{i,3}= 0, 				\label{constraint:c3}	\\
			& \forall i,\ \frac{C(1 - \tau)}{|y_i|} - |y_i| u_{i,2} - u_{i,4}= 0, 			\label{constraint:c4}	\\
			& \forall i,\ u_{i,1}, u_{i,2}, u_{i,3}, u_{i,4} \geq 0. \label{constraint:c5}
\end{eqnarray}
We can simplify the problem by introducing a new parametrisation via the
variables $\alpha_i = u_{i,1} - u_{i,2}$. Then the value of $w$ is obtained
from constraint \eqref{constraint:c1} as
$w=\sum_{i=1}^n\alpha_i\phi(x_i)$. Constraints \eqref{constraint:c2} can be
rewritten into $1^T\alpha = 0$. Taking those equations into account, the
objective function becomes
\begin{multline*}
  h(\theta) + \sum_{i = 1}^n \left(u_{i,1}g_{i,1}(\theta) +
    u_{i,2}g_{i,2}(\theta) + u_{i,3}g_{i,3}(\theta) +
    u_{i,4}g_{i,4}(\theta)\right)\\
=h(\theta)+\sum_{i=1}^n\alpha_iy_i-\|w\|^2-\sum_{i=1}^n\xi_i(u_{i,1}|y_i|+u_{i_3})-\sum_{i=1}^n\xi^*_i(u_{i,2}|y_i|+u_{i_4}).
\end{multline*}
Using constraints \eqref{constraint:c3} and \eqref{constraint:c4}, the last
two terms simplify as follows:
\[
\sum_{i=1}^n\xi_i(u_{i,1}|y_i|+u_{i_3})+\sum_{i=1}^n\xi^*_i(u_{i,2}|y_i|+u_{i_4})=C \sum_{i = 1}^n \frac{\tau \xi_i + (1 - \tau) \xi_i^\star}{|y_i|},
\]
and thus the objective function is given by
\begin{multline*}
  h(\theta) + \sum_{i = 1}^n \left(u_{i,1}g_{i,1}(\theta) +
    u_{i,2}g_{i,2}(\theta) + u_{i,3}g_{i,3}(\theta) +
    u_{i,4}g_{i,4}(\theta)\right)\\
=\sum_{i=1}^n\alpha_iy_i-\frac{1}{2}\|w\|^2=\alpha^Ty-\frac{1}{2}\alpha^T K \alpha,
\end{multline*}
where $K_{ij}=k(x_i,x_j)$ is the kernel matrix. This shows that the objective
function can be rewritten so as to depend only on the new variables
$\alpha_i$. The last step of the analysis consists in showing that a similar
property holds for the constraints. The cases of constraints
\eqref{constraint:c1} and \eqref{constraint:c2} have already been
handled. 

Notice that given an arbitrary $\alpha_i$, there is always $u_{i,1}\geq 0$ and
$u_{i,2}\geq 0$ such that $\alpha_i = u_{i,1} - u_{i,2}$. However,
constraints \eqref{constraint:c3} and \eqref{constraint:c4} combined with
$u_{i,3}\geq 0$ and $u_{i,4}\geq 0$ show that $u_{i,1}$ and $u_{i,2}$ (and
thus $\alpha_i$) cannot
be arbitrary, as we need $\frac{C\tau}{|y_i|} - |y_i| u_{i,1}\geq 0$ and
$\frac{C(1 - \tau)}{|y_i|} - |y_i| u_{i,2}\geq 0$. As $u_{i,2}\geq 0$,
$\alpha_i\leq u_{i,1}$ and thus $\alpha_i\leq \frac{C\tau}{|y_i|^2}$. As $u_{i,1}\geq 0$,
$-\alpha_i\leq u_{i,2}$ and thus $\alpha_i\geq
\frac{C(1-\tau)}{|y_i|^2}$. Conversely, it is easy to see that if $\alpha_i$
satisfies the constraints $\frac{C(\tau - 1)}{|y_i|^2} \leq \alpha_i  \leq
\frac{C \tau}{|y_i|^2}$, then there is $u_{i,k}$ for $k=1,\ldots,4$ 
 such that $\alpha_i = u_{i,1} - u_{i,2}$ and such that the
constraints \eqref{constraint:c3}, \eqref{constraint:c4} and
\eqref{constraint:c5} are satisfied (take $u_{i,1}=\max(0,\alpha_i)$ and
$u_{i,2}=\max(0,-\alpha_i)$).

Then problem \eqref{eq:Wolfe} is finally equivalent to 
\begin{eqnarray}
\label{eq:DualMAPE}
\max_{\alpha} & \alpha^T y - \frac{1}{2} \alpha^T K \alpha  \label{eq:Wolfe1} \\
\mbox{s.c.} 	& 1^T \alpha = 0 \nonumber \\
			& \forall i,\ \frac{C(\tau - 1)}{|y_i|^2} \leq \alpha_i  \leq \frac{C \tau}{|y_i|^2}. \nonumber
\end{eqnarray}

\subsubsection{Comparaison to the quantile regression}

In the case of quantile regression, \cite{takeuchi2006nonparametric} shows that the dual problem is equivalent to
\begin{eqnarray*}
\max_{\alpha} & \alpha^T y - \frac{1}{2}\alpha^T K \alpha   \\
\mbox{s.c.} 	& 1^T \alpha = 0 \\
			& \forall i,\ C(\tau - 1) \leq \alpha_i  \leq C \tau.
\end{eqnarray*}

In comparison to problem \eqref{eq:DualMAPE}, one can remark that the
modification of the loss function (from the absolute error to the absolute
percentage error) in the primal optimization problem is equivalent to changing
the set of optimization in the dual optimization problem. More precisely, it
is equivalent to reducing (resp. increasing) the ``size'' of the optimization set
of $\alpha_i$ if $y_i > 1$ (resp. $y_i < 1$). 

Thus, the smaller is $y_i$, the larger is the optimization set of
$\alpha_i$. This permits to ensure a better fit on small values of $y_i$
(i.e. where the absolute percentage error is potentially bigger). Moreover, by
choosing a very large value of $C$ (or $C \to \infty$), one can ensure the
same optimal value of each $\alpha_i$ in MAE and MAPE dual problems. This
surprising fact can be explained by noticing that a very large value of $C$
corresponds to a very small value of $\lambda$ (or $\lambda \to 0$). When
$\lambda$ goes to zero, the regularization in equations \eqref{eq:NPmae} and
\eqref{eq:NPMAPE} vanishes, which leads to potential overfitting. When this
overfitting appears, $f(x_i)\simeq y_i$ regardless of the loss function and
thus the different loss functions are equivalent.

\subsection{A simulation study}\label{sec:simulation-study}

\subsubsection{Generation of observations} 
In this section, we illustrate the efficiency of the kernel MAPE regression
described in section~\ref{sec:NPtheory}  on simulated data, and we compare the
results to the ones obtained by kernel median regression. Experiments have been realized using a Gaussian kernel.

As in \cite{takeuchi2006nonparametric}, we have simulated data according to the sinus cardinal function, defined by
\[ sinc(x) = \frac{sin(2\pi x)}{2\pi x} \]
However, to illustrate the variation of the prediction according the proximity
to zero, we add a parameter $a$ and we define the translated sinus cardinal
function by: 
\[ sinc(x,a) = a + \frac{sin(2\pi x)}{2\pi x} \]
For experiments, we have generated 1000 points to constitute a training set, and
1000 other points to constitute a test set. As in
\cite{takeuchi2006nonparametric}, the generation process is the following:
\[ Y = sinc(X,a) + \epsilon(X) \]
with $X \sim \mathcal{U([-1;1])}$ and
$\epsilon(X) \sim \mathcal{N}\left(0, \left(0.1\cdot\exp(1-X)\right)^2\right)$

To compare the results between the median estimation and the MAPE estimation,
we have computed $\widehat{f}_{MAPE,a}$ and $\widehat{f}_{MAE,a}$ for several
values of $a$. The value of the regularization parameter $C$ is chosen via a 5-fold cross-validation.

\subsubsection{Results}

\begin{table}[ht]
\centering
\begin{tabular}{rrrrr}
\toprule
a 	& $MAPE(y, \widehat{f}_{MAE, a})$ 	& $MAPE(y, \widehat{f}_{MAPE, a})$ 	& $C_{MAE}$ 	& $C_{MAPE}$ \\ 
 	&  (in \%) 							& 	(in \%)						&	 		& 			\\ 
  \midrule
0.00 & 128.62 & 94.09 & 0.01 & 0.10 \\ 
  0.10 & 187.78 & 100.10 & 0.05 & 0.01 \\ 
  0.50 & 72.27 & 57.47 & 5.00 & 10.00 \\ 
  1.00 & 51.39 & 39.53 & 10000.00 & 1.00 \\ 
  2.50 & 10.58 & 10.98 & 5.00 & 1.00 \\ 
  5.00 & 4.80 & 4.89 & 5.00 & 10.00 \\ 
  10.00 & 2.39 & 2.40 & 5.00 & 100.00 \\ 
  25.00 & 0.96 & 0.96 & 5.00 & 100000.00 \\ 
  50.00 & 0.48 & 0.48 & 5.00 & 1000.00 \\ 
  100.00 & 0.24 & 0.24 & 5.00 & 10000.00 \\ 
   \bottomrule
\end{tabular}
\caption{Summary of the experimental results: for each value of the
  translation parameter $a$, the table gives the MAPE of
  $\widehat{f}_{MAPE,a}$ and $\widehat{f}_{MAE,a}$ estimated on the test
  set. The table also reports the value of the regularization parameter $C$
  for both loss function.}
\label{tab:res}
\end{table}

Results of experiments are described in the table~\ref{tab:res}. As expected,
in most of the cases, the MAPE of $\widehat{f}_{MAPE, a}$ is lower than the
one of $\widehat{f}_{MAE, a}$. This is 
especially the case when values of $y$ are close to zero. 

\subsubsection{Graphical illustration}
Some graphical representations of $\widehat{f}_{MAPE, a}$ and
$\widehat{f}_{MAE, a}$ are given on Figure~\ref{fig:res}. This Figure
illustrates several interesting points: 
\begin{itemize}
  \item When, for a given $x$, $y$ may take both negative and positive values,
    $\widehat{f}_{MAPE, a}(x)$ (red curve) is very close or equal to 0 to
    ensure a 100\% error whereas $\widehat{f}_{MAE, a}(x)$ (blue curve)
    is closer to the conditional median, which leads to a strongly higher
    error (in MAPE terms). 
  \item Up to translation, $\widehat{f}_{MAE, a}$ looks roughly the same for
    each $a$, whereas the shape of $\widehat{f}_{MAPE, a}(x)$ is strongly
    modified with $a$. This is because the absolute error (optimization
    criteria for the blue curve) remains the same if both the observed value
    $Y$ and its predicted value are translated by the same value, whereas the
    MAPE changes.
  \item Red curves are closer to 0 than blue curves. One can actually show
    that, regarding to the MAPE, the optimal estimator (red) of a random
    variable $Y$ is indeed below the median (blue). 
  \item The red curve seems to converge toward the blue one for high values of
    $a$. 
\end{itemize}

\begin{figure}
\centering
\includegraphics[width =\linewidth]{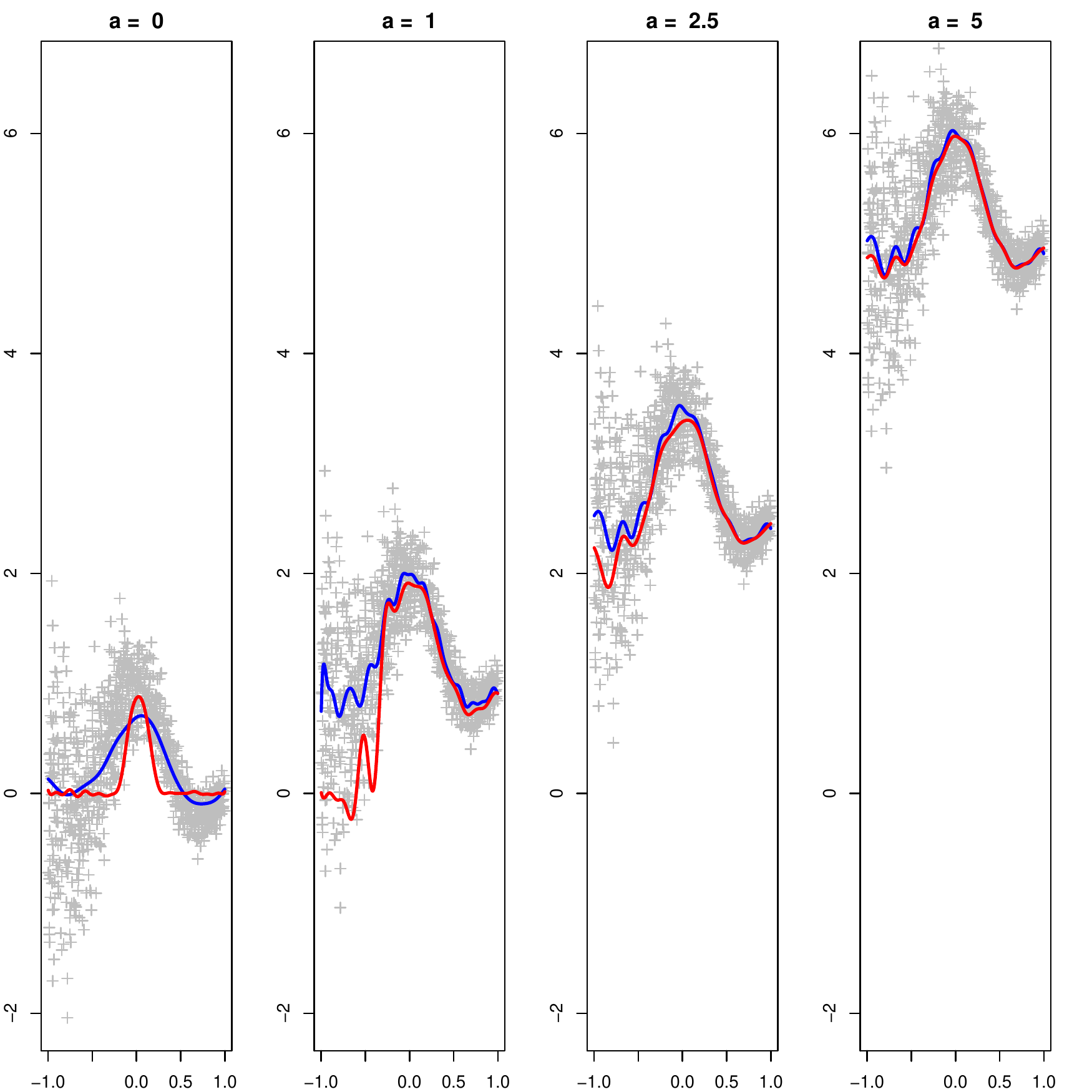}
\caption{Representation of estimation: $\widehat{f}_{MAE, a}$ in blue and $\widehat{f}_{MAPE, a}$ in red.}
\label{fig:res}
\end{figure}

\section{Conclusion}
We have shown that learning under the Mean Absolute Percentage Error is
feasible both on a practical point of view and on a theoretical one. More
particularly, we have shown the existence of an optimal model regarding to the
MAPE and the consistency of the Empirical Risk Minimization. Experimental
results on simulated data illustrate the efficiency of our approach to
minimize the MAPE through kernel regressions, what also ensures its efficiency
in application contexts where this error measure is adapted (in general when
the target variable is positive by design and remains quite far away from
zero, e.g. in price prediction for expensive goods). Two open theoretical
questions can be formulated from this work. A first question is  whether the
lower bound hypothesis on $|Y|$ can be lifted: in the case of MSE based
regression, the upper bound hypothesis on $|Y|$ is handled via some clipping
strategy (see e.g. Theorem 10.3 in \cite{gyorfi_etal_DFTNR2002}). This cannot
be adapted immediately to the MAPE because of the importance of the lower
bound on $|Y|$ in the approximation part of Theorem
\ref{theorem:consistency-mape}. A second question is whether the case of
empirical regularized risk minimization can be shown to be consistent in the
case of the MAPE.

\section*{Acknowledgment}
The authors thank the anonymous reviewers for their
valuable comments that helped improving this paper. 

\section*{References}
\bibliography{mape}

\begin{thebibliography}{1}
\expandafter\ifx\csname url\endcsname\relax
  \def\url#1{\texttt{#1}}\fi
\expandafter\ifx\csname urlprefix\endcsname\relax\def\urlprefix{URL }\fi
\expandafter\ifx\csname href\endcsname\relax
  \def\href#1#2{#2} \def\path#1{#1}\fi

\bibitem{Huber1964}
P.~J. Huber, Robust estimation of a location parameter, The Annals of
  Mathematical Statistics 35~(1) (1964) 73--101.

\bibitem{ArmstrongCollopy1992}
J.~S. Armstrong, F.~Collopy, Error measures for generalizing about forecasting
  methods: Empirical comparisons, International Journal of Forecasting 8~(1)
  (1992) 69 -- 80.

\bibitem{gyorfi_etal_DFTNR2002}
L.~Gy\"orfi, M.~Kohler, A.~Krzy\.zak, H.~Walk, A Distribution-Free Theory of
  Nonparametric Regression, Springer, New York, 2002.

\bibitem{ab-nnltf-99}
M.~Anthony, P.~L. Bartlett, Neural Network Learning: Theoretical Foundations,
  Cambridge University Press, 1999.

\bibitem{koenker2013quantreg}
R.~Koenker, quantreg: Quantile regression. r package version 5.05 (2013).

\bibitem{boyd2004convex}
S.~Boyd, L.~Vandenberghe, Convex optimization, Cambridge university press,
  2004.

\bibitem{koenker1978regression}
R.~Koenker, G.~Bassett~Jr, Regression quantiles, Econometrica: journal of the
  Econometric Society (1978) 33--50.

\bibitem{takeuchi2006nonparametric}
I.~Takeuchi, Q.~V. Le, T.~D. Sears, A.~J. Smola, Nonparametric quantile
  estimation, The Journal of Machine Learning Research 7 (2006) 1231--1264.

\bibitem{li2007quantile}
Y.~Li, Y.~Liu, J.~Zhu, Quantile regression in reproducing kernel hilbert
  spaces, Journal of the American Statistical Association 102~(477) (2007)
  255--268.

\end{thebibliography}

\end{document}